\documentclass[11pt]{article}

\usepackage[margin=1in]{geometry}
\usepackage{amsmath,amssymb,amsthm,mathtools}
\usepackage{booktabs,array,enumitem,microtype}
\usepackage{xcolor}
\usepackage{algorithm}
\usepackage{algpseudocode}
\usepackage[round,authoryear]{natbib}
\usepackage[hidelinks]{hyperref}
\hypersetup{pdftitle={First-Order Algorithms for Online Resource Allocation},pdfauthor={Menglong Li and Jiawei Zhang}}
\allowdisplaybreaks

\numberwithin{equation}{section}

\newtheorem{theorem}{Theorem}[section]
\newtheorem{lemma}[theorem]{Lemma}
\newtheorem{proposition}[theorem]{Proposition}

\theoremstyle{definition}

\DeclareMathOperator*{\argmax}{arg\,max}
\DeclareMathOperator{\dist}{dist}
\newcommand{\R}{\mathbb R}
\newcommand{\E}{\mathbb E}
\newcommand{\Prob}{\mathbb P}
\newcommand{\one}{\mathbf 1}
\newcommand{\OPT}{\operatorname{OPT}}
\newcommand{\REV}{\operatorname{REV}}
\newcommand{\Reg}{\operatorname{Reg}}
\newcommand{\cP}{\mathcal P}
\newcommand{\cX}{\mathcal X}
\newcommand{\cS}{\mathcal S}
\newcommand{\cR}{\mathcal R}
\newcommand{\norm}[1]{\left\lVert#1\right\rVert}
\newcommand{\pos}[1]{\left[#1\right]_+}
\newcommand{\PG}{\mathcal T}

\title{A First-Order Learning Algorithm for Online Resource Allocation with Constant Regret}
\author{
Menglong Li\thanks{Department of Decision Analytics and Operations, College of Business, City University of Hong Kong, Hong Kong.}
\qquad\qquad
Jiawei Zhang\thanks{Department of Technology, Operations, and Statistics, Leonard N. Stern School of Business, New York University, New York, New York 10012.}
}
\date{August 2026}

\begin{document}
\maketitle

\begin{abstract}
We study a finite-horizon online resource allocation problem with initial
resource capacities proportional to the horizon. In each period, a request
type is observed and one action is chosen from a finite menu. Each action earns
a reward and consumes a vector of resources. The arrival types are independent
and identically distributed, but their probabilities are unknown. We present
a primal first-order learning policy that, in each period, performs one
gradient ascent update of the action coordinates associated with the current
request type.
The policy achieves $O(1)$ expected additive regret
relative to the hindsight optimum, with a bound independent of the horizon
$T$. It does not solve any linear program, and the regret bound does not require
a nondegeneracy assumption on the fluid linear program.
\end{abstract}

\section{Introduction}

We study a finite-horizon online resource allocation problem. There are
multiple resources with fixed initial capacities. In each period, one request
arrives, its type is observed, and the decision maker chooses one action from
a finite type-dependent menu. An action earns a reward and consumes a vector
of resources; rejection earns no reward and consumes no resources. Decisions
are immediate and irrevocable, and the capacity constraints must hold on every
sample path. The arrivals are independent and identically distributed over a
finite set of types, but their probabilities are unknown.

We consider proportional scaling, under which the initial capacities grow
proportionally with the horizon. We present a primal first-order learning
policy for this problem. Its expected additive regret, defined as the expected
reward of the hindsight optimum that observes the full arrival sequence minus
the expected reward of the policy, is bounded by a constant independent of
$T$. The policy learns the arrival probabilities from the observed requests
and uses one gradient ascent update of the action coordinates associated with
the current request type in every period, without solving the fluid linear
program. The result does not require a nondegeneracy assumption on the fluid
linear program.

The model includes quantity-based network revenue management
\citep{gallego_van_ryzin_1997}, online stochastic matching
\citep{feldman_mehta_mirrokni_muthukrishnan_2009}, and online order
fulfillment \citep{acimovic_graves_2015,jasin_sinha_2015} as special cases.

For quantity-based network revenue management, the classical static bid-price
control has $O(\sqrt T)$ loss under proportional scaling
\citep{talluri_van_ryzin_1998}. By re-optimizing once,
\citet{reiman_wang_2008} obtained $o(\sqrt T)$ loss.
\citet{jasin_kumar_2012} then established $O(1)$ loss for a re-solving policy
under a nondegeneracy assumption. Their condition is stated on the primal
side: the same optimal basis of the fluid linear program must remain optimal
under sufficiently small perturbations of capacities and expected demand.
More generally, nondegeneracy in this literature can be imposed on either the
primal or the dual solution. Dual nondegeneracy requires uniqueness and
stability of the optimal dual solution. These conditions rule out, among
other cases, solutions at which several optimal bases meet or the optimal dual
solution is not isolated.

Nondegeneracy may not hold in many resource allocation problems. Ties,
redundant resources, and binding constraints with nonunique shadow prices can
all lead to degeneracy, and a small change in capacity or demand can change the
optimal basis. Therefore, it is useful to obtain regret bounds that do not
deteriorate as a problem approaches degeneracy; see
\citet{bumpensanti_wang_2020}, \citet{bray_2025}, and
\citet{jiang_ma_zhang_2025} for discussions of this issue.

Two approaches have obtained constant regret without nondegeneracy.
\citet{bumpensanti_wang_2020} developed an infrequent re-solving policy for
quantity-based network revenue management that solves the fluid linear
program at $O(\log\log T)$ epochs. \citet{vera_banerjee_2021} proposed the
Bayes Selector and proved $O(1)$ regret for a broad class of finite-type
online allocation problems that includes online packing and online matching.
A fluid implementation of their policy solves a fluid linear program in each
period.

Solving a large fluid linear program repeatedly can be costly in
time-sensitive applications. Recent work has sought to reduce the number of
fluid LP solves. \citet{he_wei_xu_yu_2025} study essentially the
same multi-action allocation model as ours. Their policy solves the fluid
linear program once at the beginning and then updates the dual prices in each
period based on resource consumption. They prove an $O(1)$ regret bound under
a general position gap condition equivalent to nondegeneracy.
\citet{gupta_2024} similarly avoids periodic re-solving in a multiway matching
model and obtains bounded regret under a general position gap, including an
$O(1)$ result for the cases that cover network revenue management. Both
\citet{he_wei_xu_yu_2025} and \citet{gupta_2024} solve the fluid linear program
once at the beginning, which requires knowledge of the arrival distribution.
\citet{li_wang_zhang_2024} allow the arrival probabilities to be unknown. They
obtain $O(1)$ regret with $O(\log\log T)$ fluid linear program solves using the
observed arrivals. Their bound does not require nondegeneracy.

Therefore, the constant-regret policies described above still require solving
the fluid linear program, either repeatedly or at least once. We ask whether this
optimization step can be eliminated altogether. In particular, we consider a
policy based only on first-order updates. First-order methods replace LP solves
with gradient updates. In our policy, only the action coordinates associated
with the current request type are updated, which can substantially reduce the
computational effort when the number of actions for each type is small relative
to the total number of type-action pairs. Projected subgradient and
mirror-descent policies achieve $O(\sqrt T)$ regret under general input
distributions
\citep{li_sun_ye_2020,balseiro_lu_mirrokni_2023,jiang_li_zhang_2025}.
\citet{ma_cao_tsang_xia_2025} develop a fast adaptive dual-gradient algorithm
that performs one online-gradient update per period and achieves
$O(\log^2 T)$ regret. Their analysis relies on local second-order growth and
smoothness conditions, together with a nondegeneracy condition that includes
strong complementarity and local stability of the binding and nonbinding
resource constraints as the remaining-average resource vector changes.
\citet{gao_et_al_2026} introduce separate first-order procedures for learning
and decision making and establish an $O(T^{1/3})$ regret bound under quadratic
growth of the population dual objective and uniqueness of its minimizer. For
the finite-support case, they establish an $O(\log T)$ bound when the
population dual optimal solution is unique, rather than an $O(1)$ bound.

Our contributions are as follows.

\begin{enumerate}[label=\textup{(\roman*)},leftmargin=2.2em]
\item We develop a primal first-order learning policy for finite-action online
resource allocation. In every period, after observing the current request
type, the policy performs one gradient ascent update of the corresponding
action coordinates for a relaxation of the fluid problem that penalizes
resource overuse. It rescales the solution from the preceding period to account
for the new estimate of the arrival probabilities and uses the updated solution to select an
action. The update requires only matrix-vector operations and Euclidean
projections onto simplexes.
\item Under independent and identically distributed arrivals with unknown
probabilities, the policy achieves $O(1)$ expected additive regret relative to
the exact hindsight optimum. We first prove the result for a fixed penalty
parameter whose prescribed choice depends on the minimum arrival probability.
We then show that the penalty parameter can be chosen from the observed arrivals
without knowing this probability. In contrast, the policy of
\citet{he_wei_xu_yu_2025} solves the fluid linear program at the beginning of
the horizon and requires knowledge of the arrival probabilities. Moreover, our
result does not assume that the fluid linear program is nondegenerate and allows
action ties and changes of optimal basis.
Our analysis compares entire optimal solution sets after changes in capacity
and demand; it does not require an optimal basis or an optimal dual solution
to remain unchanged.
\item We show that, with high probability, the
one-period loss is bounded by a constant multiple of the squared distance from
the current solution to the optimal solution set of the penalized problem in
that period. We further prove that updating only the action coordinates
associated with the current request type in each period is sufficient to keep
the total expected loss bounded independently of $T$, even though the penalized
problem changes over time.
\end{enumerate}

The notation $O(1)$ refers only to the dependence on the horizon. For a fixed
problem, there is a constant $C$ such that the regret is at most $C$ for every
$T$; the constant may depend on the penalty parameter, the numbers of
resources, types, and actions, the rewards and resource-consumption vectors,
and the minimum arrival probability. It is not dimension-free. In fact,
\citet{zhang_exponential_2026} constructs finite-type network revenue
management problems for which every online policy has regret exponential in
the number of resources at a suitably chosen horizon. Thus, in the worst case,
our bound can also be exponential in the number of resources.

The rest of the paper is organized as follows. Section~\ref{sec:model}
formulates the model and defines the hindsight benchmark. Section~\ref{sec:algorithm}
presents the first-order policy and states the main results. Section~\ref{sec:preliminary-analysis}
establishes the optimization and sensitivity results used in the regret analysis.
Section~\ref{sec:regret-analysis} proves the regret bound.
Section~\ref{sec:concluding-remarks} summarizes the results and discusses
extensions to other arrival models and continuous reward distributions. The
appendices contain the proofs omitted from Sections~\ref{sec:preliminary-analysis}
and \ref{sec:regret-analysis} and the analysis of the policy with an adaptive
penalty parameter.

\section{Model and problem formulation}\label{sec:model}
Consider an online resource allocation problem with $d$ resources and $T$ decision periods.
The initial resource
capacity is {$B_T=T b$}, where {$b\in\mathbb R_+^d$ is fixed}.
In period $t=1,2,\ldots,T$, a request of
type $ J_t \in[m]:=\{1,\ldots,m\}$ arrives. The requests are independent and identically distributed with  $\Prob\{J_t=j\}=p_j  >0$ for any $j \in [m]$. 
Type $j$ is associated with a finite action menu $\mathcal A_j=\{0,1,\ldots,K_j-1\}$. Choosing action $a\in\mathcal A_j$ earns reward $r_{ja}\in\mathbb R_+$ and consumes the resource vector $w_{ja}\in\mathbb R_+^d$. In particular, action $0$ denotes rejection with $r_{j0}=0$, $w_{j0}=0$.

{The decision maker does not know $p=(p_1, \cdots, p_m)$, but observes the current type and the remaining capacity before choosing an action.}
A feasible policy $\mu$ is a nonanticipating sequence of mappings $\mu_t:(J_1,\ldots,J_t,C_t)\mapsto U_t\in\mathcal A_{J_t}$ that satisfies the capacity constraint 
on every sample path of arrivals $\sum_{t=1}^{T}w_{J_tU_t}\le B_T$, where $C_t$ is the available capacity at the beginning of period $t$, with $C_1=B_T$.

The total reward of a feasible policy $\mu$ is denoted by
\begin{equation*}
    \REV_T^\mu
    =\sum_{t=1}^{T}r_{J_tU_t^\mu}.
\end{equation*}

For a sample arrival sequence $J_1,\ldots,J_T$, define $\OPT_T$ as the total reward of the hindsight optimum, which sees the entire sequence and selects an optimal feasible allocation.
The performance of a policy $\mu$ is measured by its expected additive regret, which is the difference between the expected reward of the hindsight optimum and the expected reward of policy $\mu$,
\begin{equation*}
    \Reg_T(\mu)=\E[\OPT_T]-\E[\REV_T^\mu].
\end{equation*}
We seek a computationally efficient online policy with low additive regret that does not rely on nondegeneracy. To avoid trivial cases, we only consider problems with $\max_{j,a} r_{ja} > 0$ and $\max_{j,a,i} w_{ja,i} > 0$.

For a fixed arrival sequence, let $N_j=|\{t\in[T]:J_t=j\}|$ be the realized demand of type $j\in[m]$. Then $\OPT_T$ is bounded above by the optimal objective value of the linear programming (LP) relaxation $\Phi(B_T,N)$, where, for a capacity vector $C\ge0$ and $N\in\mathbb Z_+^m$, $\Phi(C,N)$ is defined by
\begin{eqnarray*}
\Phi(C,N)=&\text{maximize}\quad
&\sum_{j=1}^m\sum_{a\in\mathcal A_j}r_{ja}z_{ja}\\
&\text{subject to}\quad
&\sum_{j=1}^m\sum_{a\in\mathcal A_j}w_{ja}z_{ja}\le C,\\
&&\sum_{a\in\mathcal A_j}z_{ja}=N_j,
\qquad j\in[m], \\
&&z_{ja}\ge0,
\qquad j\in[m],\ a\in\mathcal A_j. 
\end{eqnarray*}
Here, $z_{ja}$ is the number of type $j$ requests assigned to action $a$, with fractional values allowed.   Every feasible offline allocation is feasible for this LP with $C=B_T$, which proves the upper bound.

If we replace, for each type $j$, the realized demand $N_j$ with its expected value $T p_j$, we obtain the fluid relaxation. Our algorithm and analysis also use a normalized single-period fluid relaxation, which scales the capacity of all resources, the demand of all types, and all decision variables $z$ by $1/T$. Thus the normalized fluid relaxation has per-period resource-capacity and demand constraints, and the variable  $x_{ja}=z_{ja}/T$ represents the fraction of periods in which a type $j$ request is assigned to action $a$. Since the per-period capacity and demand change over time, we define the normalized fluid relaxation for general per-period capacity $c$ and demand $\eta$.
Let $K=\sum_{j=1}^mK_j$, and identify the coordinates of $\R^K$ with the pairs $(j,a)$, where $j\in[m]$ and $a\in\mathcal A_j$ so that we can represent $x=(x_{ja}) \in \R_+^K$ and $r=(r_{ja}) \in \R_{+}^K$. Then for any $c\in\R_+^d$ and any  $\eta\in\R_+^m$,
the normalized fluid relaxation is 
\begin{eqnarray*}
V_r(c, \eta) :=& \text{maximize}\quad &
r^T x \\
& \text{subject to}\quad
& x \in     {\cP(c,\eta)},
\end{eqnarray*}
where \begin{equation*}
    {\cP(c,\eta)}
    :={\left\{x \in \R_+^K:
        \sum_{j=1}^m\sum_{a\in\mathcal A_j}w_{ja}x_{ja}\le c,\quad
        \sum_{a\in\mathcal A_j}x_{ja}=\eta_j
        \text{ for every }j\in[m]\right\}}.
\end{equation*}
The scaling gives $\Phi(C,Tp)=T V_r(C/T,p)$.

\section{Algorithm and main result}\label{sec:algorithm}

{We apply a quadratic penalty to resource overuse in the normalized fluid relaxation from Section~\ref{sec:model}. After observing the current request type, the policy performs one {gradient ascent} update on the corresponding action coordinates and selects the action with the largest updated coordinate. The resulting policy has constant regret.}

{For a fixed capacity $c\in\R_+^d$, $\eta\in\R_+^m$, and a parameter $\kappa>0$ specified later, define}
\begin{equation}\label{eq:penalized-objective}
    {f_c(x)
    =r^Tx-\frac{1}{2\kappa}\norm{\pos{Wx-c}}_2^2,
    \qquad x\in\cX(\eta),}
\end{equation}
{where $W\in\R_+^{d\times K}$ is defined such that}
\begin{equation*}
{
    Wx=\sum_{j=1}^m\sum_{a\in\mathcal A_j}w_{ja}x_{ja}}
\end{equation*}
and 
\begin{equation*}
{
    \cX(\eta)
    :=\left\{
        x\in\R_+^K:
        \sum_{a\in\mathcal A_j}x_{ja}=\eta_j
        \text{ for every }j\in[m]
    \right\}.}
\end{equation*}

{The function $f_c$ is concave and continuously differentiable on $\cX(\eta)$. The policy applies gradient ascent to $f_c$ over $\cX(\eta)$.}

{For any $c$, $\eta$, and $x\in\cX(\eta)$, let $\Pi_{\cX(\eta)}$ denote the Euclidean projection onto $\cX(\eta)$. For a step size $1/L$, denote the solution obtained from one gradient ascent update by}
\begin{equation*}
    {\PG_{c,\eta}(x)
    :=\Pi_{\cX(\eta)}
    \left(x+\frac1L\nabla f_c(x)\right),}
\end{equation*}
{where the gradient of $f_c$ is}
\begin{equation*}
    {\nabla f_c(x)
    =r-\frac1\kappa W^T\pos{Wx-c}.}
\end{equation*}
{We choose $L=\frac{\norm W_2^2}{\kappa}$, which is a Lipschitz constant for $\nabla f_c$.}

{Our policy does not compute all coordinates of the gradient update. Instead, it computes only the coordinates associated with the current request type. For $j\in[m]$, let $\PG_{c,\eta}^{j}(x)$ be the solution obtained by replacing the $j$th block of $x$ with the $j$th block of the full update:}
\begin{equation}\label{eq:observed-type-update}
\begin{aligned}
    \left[\PG_{c,\eta}^{j}(x)\right]_j
    &:=\left[\PG_{c,\eta}(x)\right]_j,\\
    \left[\PG_{c,\eta}^{j}(x)\right]_i
    &:=x_i,
    \qquad i\ne j.
\end{aligned}
\end{equation}

{In the online problem, the target per-period resource consumption $c$ changes over time. In period $t$, we set}
\begin{equation*}
    {c_t=\frac{C_t}{s_t},}
\end{equation*}
{where $s_t=T-t+1$ is the number of remaining requests, including the current request.}

{The parameter $\eta$ in \eqref{eq:penalized-objective} also changes over time because the arrival distribution $p$ is unknown and must be estimated from past arrivals. In period $t$, let $H_{t,j}$ be the number of type $j$ requests observed before period $t$, and set $\eta=\pi_t$, where}
\begin{equation}\label{eq:pi-estimate}
    {\pi_{t,j}=\frac{H_{t,j}+1}{t-1+m},
    \qquad j\in[m].}
\end{equation}
{The vector $\pi_t$ is strictly positive and sums to one.}

{Because $\pi_t$ changes from one period to the next, the solution computed in period $t-1$ must first be rescaled before it can be used in the current update. Suppose that $t\ge2$ and that $\widehat x_{t-1}\in\cX(\pi_{t-1})$ is the solution computed in period $t-1$. Define}
\begin{equation*}
    {q_{t,ja}
    =\frac{\widehat x_{t-1,ja}}{\pi_{t-1,j}},
    \qquad j\in[m],\ a\in\mathcal A_j,}
\end{equation*}
{which is the fraction of the type $j$ component of the solution computed in period $t-1$ assigned to action $a$. It satisfies $q_{t,ja}\ge0$ and $\sum_{a\in\mathcal A_j}q_{t,ja}=1$ for every $j\in[m]$. In the current period $t$, using the estimate $\pi_t$, we form the initial solution}
\begin{equation*}
    {x_{t,ja}^0
    =\pi_{t,j}q_{t,ja}
    =\frac{\pi_{t,j}}{\pi_{t-1,j}}\widehat x_{t-1,ja},
    \qquad j\in[m],\ a\in\mathcal A_j.}
\end{equation*}
{Hence, $x_t^0\in\cX(\pi_t)$, while the fractions assigned to the actions of each type remain unchanged. After observing $J_t=j$, we update only the coordinates for type $j$:}
\begin{equation*}
    {\widehat x_t
    =\PG_{c_t,\pi_t}^{j}(x_t^0).}
\end{equation*}
{For the observed type, define}
\begin{equation*}
    {\widehat q_{t,ja}
    =\frac{\widehat x_{t,ja}}{\pi_{t,j}},
    \qquad a\in\mathcal A_j,}
\end{equation*}
{and let $\widehat q_{t,ia}=q_{t,ia}$ for every $i\ne j$ and $a\in\mathcal A_i$. The policy selects an action $a\in\mathcal A_j$ for which $\widehat q_{t,ja}$ is largest. The selected action is taken if it fits within the remaining capacity; otherwise, the request is rejected. We then set $q_{t+1,ia}=\widehat q_{t,ia}$ for every $i\in[m]$ and $a\in\mathcal A_i$.}

\begin{algorithm}[H]
\caption{First-order empirical policy}\label{alg:first-order-policy}
\begin{algorithmic}[1]
\Require Horizon $T$; initial capacity $B_T$; action menus, rewards, and
consumptions; {parameter $\kappa$.}
\State {Set $C_1\gets B_T$, $H_{1,j}\gets0$, and
$\pi_{1,j}\gets1/m$ for every $j\in[m]$.}
\State {For every $j\in[m]$ and $a\in\mathcal A_j$, set
$q_{1,ja}\gets
\begin{cases}
1, & a=0,\\
0, & a\ne0.
\end{cases}$}
\For{$t=1,2,\ldots,T$}
    \State Set $s_t\gets T-t+1$.
    \State Set {$c_t\gets C_t/s_t$} and
    $x_{t,ja}^{0}\gets\pi_{t,j}q_{t,ja}$ for every $j\in[m]$ and $a\in\mathcal A_j$.
    \State Observe the current type $j\gets J_t$.
    \State {Set $\widehat x_t\gets\PG_{c_t,\pi_t}^{j}(x_t^0)$ and
    $\widehat q_{t,ja}\gets\widehat x_{t,ja}/\pi_{t,j}$ for every $a\in\mathcal A_j$.}
    \State {Set $\widehat q_{t,ia}\gets q_{t,ia}$ for every $i\ne j$ and $a\in\mathcal A_i$.}
    \State Select
    $a_t^*$ as the smallest-index element in $\argmax_{a\in\mathcal A_j}\widehat q_{t,ja}$.
    \If{$w_{ja_t^*}\le C_t$ coordinatewise}
        \State Set $U_t\gets a_t^*$.
    \Else
        \State Set $U_t\gets0$.
    \EndIf
    \State Set $C_{t+1}\gets C_t-w_{jU_t}$ and
    $q_{t+1,ia}\gets\widehat q_{t,ia}$ for every $i\in[m]$ and $a\in\mathcal A_i$.
    \State Update $H_{t+1,i}\gets H_{t,i}+\mathbf 1\{i=j\}$ for every $i\in[m]$.
    \State Compute $\pi_{t+1}$ from Equation~\eqref{eq:pi-estimate}.
\EndFor
\end{algorithmic}
\end{algorithm}

{The quantities needed for the update can be maintained without computing the action coordinates for the other types. If $j=J_t$, then}
\begin{equation*}
    W\widehat x_t
    =Wx_t^0+
      \sum_{a\in\mathcal A_j}
      w_{ja}\bigl(\widehat x_{t,ja}-x_{t,ja}^0\bigr).
\end{equation*}
{Moreover, the update of $\pi_t$ implies}
\begin{equation*}
    Wx_{t+1}^0
    =\left(1-\frac1{t+m}\right)W\widehat x_t
      +
      \frac1{t+m}
      \sum_{a\in\mathcal A_j}w_{ja}\widehat q_{t,ja}.
\end{equation*}
{Thus, after the initial values are set, each gradient update uses only the action coordinates associated with the current request type.}

{Let $p_{\min}:=\min_{j\in[m]}p_j>0$.}

\begin{theorem}\label{thm:regret}
{There is a choice of $\kappa>0$, depending on the problem instance and in particular on $p_{\min}$, and a finite constant $C$, independent of $T$, such that the policy in Algorithm~\ref{alg:first-order-policy}, denoted by $\mu$, satisfies}
\begin{equation*}
    {\Reg_T(\mu)\le C,}
    \qquad T\ge 1.
\end{equation*}
\end{theorem}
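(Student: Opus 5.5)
We will compare the policy against the fluid benchmark $\E[\Phi(B_T,N)]$, which upper bounds $\E[\OPT_T]$, and decompose the regret into per-period losses. The regret-accounting step uses a standard telescoping of the LP value along the sample path. Set
\[
\Delta_t=\E\bigl[s_tV_r(c_t,p)-s_{t+1}V_r(c_{t+1},p)-r_{J_tU_t}\bigr].
\]
Since $\E[\Phi(B_T,N)]\le TV_r(b,p)$, the regret is at most $\sum_t\Delta_t$ plus the terminal value. The terminal value is bounded because $s_{T+1}=0$.

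The per-period loss $\Delta_t$ is then controlled through the penalized problem $\max_{x\in\cX(\pi_t)}f_{c_t}(x)$. The first step is exact penalization. For the quadratic penalty, an optimal $x$ of $f_c$ satisfies $Wx\le c+\kappa\lambda$, where $\lambda$ is an optimal dual vector; dual vectors are bounded by a constant depending on $p_{\min}$ once $\eta$ stays near $p$. So $\kappa$ small, chosen in terms of $p_{\min}$, makes the penalized optimal set a slight relaxation of the fluid optimum. The penalized value differs from $V_r(c,\eta)$ by $O(\kappa)$, uniformly and Lipschitz in $(c,\eta)$.

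The core estimate is that, on a high-probability event $\mathcal E_t$, the one-period loss satisfies
\[
\Delta_t\le C_1\,\E\bigl[\dist(x_t^0,S_t)^2\bigr]+O(\text{second-order drift}).
\]
Here $S_t$ is the optimal set of $f_{c_t}$ over $\cX(\pi_t)$, and $\mathcal E_t$ says that $\pi_t$ and $c_t$ are within a small neighbourhood of $p$ and $b$. The intuition is that if the current type's block is close to optimal, the argmax action lies in the support of some optimal solution, so the LP value drifts only through the martingale part of $c_t$. Its conditional expectation contributes $O(1/s_t^2)$-type second-order terms, in the spirit of the Bayes-selector and re-solving analyses. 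For the rounding step, I would show that on $\mathcal E_t$ an action with largest $\widehat q$ is "safe". By a Hoffman-type error bound for the polyhedral optimal face, after one projected-gradient step the $j$th block of $\widehat x_t$ lies within $O(\dist)$ of $S_t$. Polyhedrality gives a gap, so small distance forces the argmax into the optimal support. Off $\mathcal E_t$, concentration of $\pi_t$ and of $c_t$ (a supermartingale-like process) gives probabilities summing to a constant.

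The main obstacle is bounding $\sum_t\E[\dist(x_t^0,S_t)^2]$ by a constant, given that the target changes with $(c_t,\pi_t)$ and that only one block is updated per period. For this I will use a Lyapunov argument. First, $f_{c}$ restricted to $\cX(\eta)$ satisfies a quadratic growth / error bound $f^*-f(x)\ge\mu\,\dist(x,S)^2$; this holds for composite piecewise-linear–quadratic objectives over polyhedra, with $\mu$ uniform on the neighbourhood. A randomized block step, where the block is chosen with probability $p_j$ since $J_t$ is independent of the past, then contracts expected suboptimality by a factor $1-\mu p_{\min}/L$. Second, a Lipschitz stability result for solution sets (Hausdorff distance of $S$ under perturbations of $(c,\eta)$, via Hoffman bounds, without nondegeneracy) bounds the drift from $S_t$ to $S_{t+1}$ by $O(\|c_{t+1}-c_t\|+\|\pi_{t+1}-\pi_t\|)=O(1/s_t+1/t)$. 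The rescaling $x^0_{t+1}$ moves the iterate by $O(1/t)$. Combining these gives a recursion $e_{t+1}\le(1-\rho)e_t+O(1/s_t^2+1/t^2)$ for $e_t=\E[\dist^2]$. Summing yields $\sum_te_t=O(1)$. The delicate part is making the drift genuinely second order: linear terms in the distance must be absorbed using $2ab\le\rho a^2+b^2/\rho$. Near the end of the horizon, where $1/s_t$ is not small, the last $O(1)$ periods contribute trivially bounded loss.
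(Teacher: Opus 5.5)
There is a genuine gap in your regret decomposition. You telescope against $s_tV_r(c_t,p)$, the fluid value with \emph{expected} demand. Since $s_{T+1}=0$, your sum is exactly $\sum_t\Delta_t=TV_r(b,p)-\E[\REV_T^\mu]\ge TV_r(b,p)-\E[\OPT_T]$ for every policy. Without nondegeneracy this last gap is of order $\sqrt T$. Take $d=1$ and $m=2$, let type~1 have one accept action with $r_{11}=w_{11}=1$, let type~2 have rejection only, and set $b=p_1$. Then $TV_r(b,p)=Tp_1$, while $\OPT_T=\min\{N_1,Tp_1\}$ gives $\E[\OPT_T]=Tp_1-\Theta(\sqrt T)$. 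So no per-period estimate can make $\sum_t\Delta_t$ bounded.

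The step that fails is your claim that the drift of $s_tV_r(c_t,p)$ is second order. In the example, suppose $s_tp_1\le C_t<s_tp_1+1-p_1$. Accepting type~1 then gives an expected one-step term $p_1(s_tp_1-C_t+1-p_1)\in(0,p_1(1-p_1)]$, and rejecting is worse. This is an order-one loss whenever $c_t$ is within $O(1/s_t)$ of the kink of $V_r(\cdot,p)$. Under the accept-if-feasible rule, $C_t-s_tp_1$ moves like a centered random walk that keeps revisiting this window. Second-order drift of the fluid value is exactly what the nondegeneracy assumption of the re-solving literature buys.

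The missing idea is to telescope against the hindsight LP with the \emph{realized} remaining demand, as the paper does. With $R_t^\mu=\Phi(C_t,N_t)-r_{J_tU_t}-\Phi(C_t-w_{J_tU_t},N_{t+1})$, one has $\OPT_T-\REV_T^\mu\le\sum_tR_t^\mu$ on every sample path. By \eqref{eq:loss-normalization} and Lemma~\ref{lem:normalized-loss-properties}, $R_t^\mu=0$ whenever some optimal solution of the LP with capacity $c_t$ and demand $\nu_t=N_t/s_t$ puts mass at least $1/s_t$ on $(J_t,U_t)$. Otherwise $R_t^\mu$ is uniformly bounded, so there is no drift term to control. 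The work is then to compare two sets: the penalized optimal set $\cS(c_t,\pi_t)$, built from past arrivals, and the LP optimal set at $(c_t,\nu_t)$, built from future arrivals. This comparison uses the Hausdorff sensitivity bound of Lemma~\ref{lem:optimal-face-sensitivity} on the event that both $\pi_t$ and $\nu_t$ lie within $\rho/2$ of $p$. Combined with $\widehat x_{t,ja}\ge p_{\min}/(2K_{\max})$ for the argmax action, it gives Proposition~\ref{prop:loss-gap}. This route needs no concentration of $c_t$ around $b$, so you avoid analyzing the policy's own consumption. It does require a separate treatment of the case where the argmax action does not fit in $C_t$, which your plan omits.

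Finally, run the Lyapunov argument on the $p^{-1}$-weighted squared distance rather than on suboptimality. On the constraint set, $f_{c,\eta}^*-f_c(x)$ is in general only first order in $\dist_2(x,\cS(c,\eta))$. A suboptimality contraction combined with the $O(1/t+1/s_t)$ drift of the data therefore gives only an $O(\log T)$ bound. Lemma~\ref{lem:coordinate-contraction} instead contracts the weighted distance itself, and the recursion $a_{t+1}\le\alpha a_t+\delta_t$ with square-summable $\delta_t$ then closes the argument.
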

{A sufficient condition on $\kappa$ is given in \eqref{eq:algorithm-parameters}. Any positive lower bound on $p_{\min}$ can be used in place of $p_{\min}$ in that condition. Thus, the fixed-parameter policy needs to know a positive lower bound on $p_{\min}$, but it does not need to know the arrival probabilities themselves. The constant $C$ only depends on the action menus, the resource-consumption matrix $W$, the reward vector $r$, and $1/p_{\min}$. We also show that a policy that does not know such a lower bound on $p_{\min}$ can still attain an $O(1)$ regret bound.}

\begin{theorem}\label{thm:adaptive-kappa}
{In period $t$, let $\kappa_t$ and $L_t$ be chosen as specified in
Appendix~\ref{app:unknown-pmin}. Run Algorithm~\ref{alg:first-order-policy}
using $\kappa_t$ and $L_t$ in place of the fixed penalty parameter and step
size, and denote the resulting policy by $\mu$. There is a finite constant
$C$, independent of $T$, such that}
\begin{equation*}
    {\Reg_T(\mu)\le C,}
    \qquad T\ge1.
\end{equation*}
\end{theorem}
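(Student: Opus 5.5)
The statement to prove is Theorem~\ref{thm:adaptive-kappa}. The plan is to reduce it to the fixed-parameter analysis behind Theorem~\ref{thm:regret}. That analysis needs $\kappa$ below a threshold $\bar\kappa(p_{\min})$ given in \eqref{eq:algorithm-parameters}, with $L=\norm W_2^2/\kappa$. I would take $\kappa_t$ to be a nonincreasing function of the smallest empirical frequency, for instance
\[
\kappa_t=\bar\kappa\Bigl(\min\bigl\{\tfrac12\min_j\pi_{t,j},\,\kappa_{t-1}\text{-implied bound}\bigr\}\Bigr),
\qquad L_t=\norm W_2^2/\kappa_t .
\]
I would also freeze $\kappa_t$ once it stabilizes, so that it changes only finitely often along a typical path. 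This monotone choice means that once $\min_j\pi_{t,j}\ge p_{\min}/2$ holds for all later $t$, every parameter used is valid. The choice may also change, but only a bounded number of times.

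\textbf{Step 1: a good event with a random start.} Let
\[
\tau=\inf\{t_0:\ |\pi_{t,j}-p_j|\le p_{\min}/2 \text{ for all } t\ge t_0,\ j\in[m]\}.
\]
Hoeffding's inequality combined with a union bound over $t\ge t_0$ gives $\Prob\{\tau>t_0\}\le 2m\sum_{t\ge t_0}e^{-c\,p_{\min}^2 t}$, which decays geometrically. Hence $\E[\tau]$ and $\E[\tau^2]$ are bounded independently of $T$. After $\tau$, and once $\kappa_t$ has settled, the penalty parameter is a fixed admissible value $\kappa_\infty\in[\bar\kappa(p_{\min}/2)\text{-range}]$ drawn from finitely many possible values.

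\textbf{Step 2: decompose the regret.} Losses in periods $t\le\tau$ are at most $\max r$ each, so they contribute $O(\E\tau)$. Before $\tau$ the policy may also waste capacity, and I would charge this through the Lipschitz dependence of $\Phi$ on capacity. Consuming at most $\tau\max_{j,a}\norm{w_{ja}}$ before $\tau$ shifts $c_\tau$ away from $b$ by $O(\tau/s_\tau)$. So I would restart the fixed-parameter argument at time $\tau$ with initial per-period capacity $c_\tau$ and an arbitrary starting solution in $\cX(\pi_\tau)$.

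\textbf{Step 3: the restarted fixed-parameter bound.} Here I must verify that the proof of Theorem~\ref{thm:regret} gives a bound $C_0+C_1\,\dist(x^0,\text{opt set})^2+C_2\,(\text{capacity deviation})\cdot T$ that stays uniform over the starting state. It also has to accommodate the first period after a change of $\kappa$, where $L$ changes. For the hindsight benchmark, the term $T(V_r(c_\tau,p)-V_r(b,p))$ must not grow linearly. The reason is that $\OPT_T$ restricted to periods after $\tau$ is compared with $\Phi(C_\tau,N^{(\tau:T)})$, and the earlier periods are bounded by $\max r\cdot\tau$ via the subadditivity $\Phi(B,N)\le \Phi(C_\tau,N')+\max r\,\tau$ with $N'$ the counts from period $\tau$ onward. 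This holds because the capacity used before $\tau$ could alternatively serve at most $\tau$ requests' worth of reward. A bounded capacity error is exactly the situation the sensitivity results of Section~\ref{sec:preliminary-analysis} cover, since they compare whole optimal solution sets. Squared distance terms are bounded by a diameter, because $\cX(\eta)$ is compact.

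\textbf{Main obstacle.} The hard part is the uniformity in Step 3. The fixed-parameter proof likely tracks a potential $\dist(\widehat x_t,\cS_t)^2$ and controls its drift using the error bound for the penalized problem, which has constants depending on $\kappa$. When $\kappa_t$ changes, the optimal set $\cS_t$ of the penalized problem jumps, so a jump term appears. I would handle this by restricting $\kappa_t$ to a geometric grid and updating it only when the empirical bound halves. This keeps the number of changes at most $O(\log(1/p_{\min}))$ deterministically, and each change costs at most a diameter-squared constant. The finite grid also makes every constant a maximum over finitely many admissible $\kappa$, which yields a $T$-independent constant $C$.
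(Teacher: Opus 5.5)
Your proposal analyses a different policy from the one in the theorem, so it does not prove the statement as written. The theorem fixes $\kappa_t=\kappa(\underline p_t)$ with $\underline p_t=\frac12\min_j\pi_{t,j}$ and $L_t=\norm W_2^2/\kappa_t$.

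\textbf{The missing idea: changes in $\kappa_t$ are small, not rare.} The specified $\kappa_t$ moves in every period, is not monotone, and is never rounded to a grid or frozen. So your remedy for the jump of the optimal set (monotone grid updates so that only finitely many changes occur) is not available. It also fails for your own schedule. Since $\min_j\pi_{t,j}$ can be as small as $1/(t-1+m)$, a nonincreasing grid schedule can reach arbitrarily small $\kappa$ on rare paths and stay there. Then the number of changes is not deterministically $O(\log(1/p_{\min}))$, and the contraction factor is not bounded away from one uniformly. Truncating the grid at a floor would require the lower bound on $p_{\min}$ that the theorem avoids. Freezing $\kappa_t$ once it stabilizes is also not an online rule.

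What you need instead is a Lipschitz bound in the penalty parameter. For $\kappa,\kappa'\in[\kappa(p_{\min}/4),\kappa(p_{\min})]$, the optimality conditions of the regularized dual $Q_{c,\eta}$ and monotonicity of $\partial(\sum_j\eta_jh_j)$ give $\kappa\norm{\lambda_\kappa^\star-\lambda_{\kappa'}^\star}_2\le|\kappa-\kappa'|\,\norm{\lambda_{\kappa'}^\star}_2$. Combined with $\norm{\lambda_{\kappa'}^\star}_2\le\sqrt{2r_{\max}/\kappa'}$, this shows that $\kappa\lambda^\star_\kappa$ is Lipschitz in $\kappa$. The lifted-capacity representation and Lemma~\ref{lem:optimal-face-sensitivity} then give \eqref{eq:kappa-set-sensitivity}. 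Because $\kappa(\cdot)$ is Lipschitz and $|\underline p_{t+1}-\underline p_t|\le 1/(2(t+m))$, the drift of the optimal set caused by $\kappa_t$ is $O(1/(t+m))$, just like the drift from $\pi_t$. It is therefore square-summable and enters the recursion $a_{t+1}\le\overline\alpha a_t+d_t$.

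\textbf{The restart at $\tau$ is not justified.} Your $\tau$ is a last-exit time, which depends on all future arrivals, so it is not a stopping time. Conditional on $\tau$, the arrivals after $\tau$ are no longer i.i.d.\ with law $p$ and independent of $\mathcal F_{t-1}$. Those are exactly the properties used by \eqref{eq:conditional-coordinate-contraction} and by the concentration bound for $\nu_t$, so restarting the fixed-parameter proof at $\tau$ breaks them.

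The paper instead works period by period with the $\mathcal F_{t-1}$-measurable events $\mathcal A_t=\{\max_j|\pi_{t,j}-p_j|\le p_{\min}/2\}$. On $\mathcal A_t$, \eqref{eq:adaptive-kappa-bounds} gives a uniform contraction factor $\overline\alpha$. On $\mathcal A_t^c$, it uses nonexpansiveness and the diameter bound $2/\sqrt{p_{\min}}$, and the summable probabilities $\Prob(\mathcal A_t^c)$ are absorbed into $d_t^2$. No capacity-shift comparison or restarted benchmark is needed, because the one-period losses telescope against $\Phi(C_t,N_t)$ for whatever $C_t$ arises.

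\textbf{Minor errors.} Your subadditivity claim is false as stated: capacity consumed by $\tau$ heavy actions can serve many more than $\tau$ light requests. Also, the per-period loss is bounded by $H_2(1+\norm r_2)$, not by $\max r$.
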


{The policy in Theorem~\ref{thm:adaptive-kappa} does not need to know a
positive lower bound on $p_{\min}$, although the constant $C$ may depend on
$1/p_{\min}$. The definitions of $\kappa_t$ and $L_t$, together with the
proof of the theorem, are given in Appendix~\ref{app:unknown-pmin}.}

\section{Preliminary analysis}\label{sec:preliminary-analysis}

{Our regret analysis needs to compare the penalized problem with fluid LPs under different capacity and demand vectors. We first represent the penalized problem in terms of a fluid LP with expanded capacity and use this representation to decompose the optimization gap. We then establish sensitivity bounds for the optimal solution sets of the fluid LP and the penalized problem.}

\subsection{The optimization gap}

{The penalty in $f_c$ allows resource consumption to exceed $c$, but charges a
quadratic cost for the excess. For any $y\in\cX(\eta)$, the smallest expansion
of $c$ that accommodates $y$ is}
\begin{equation*}
    {\xi_y:=[Wy-c]_+.}
\end{equation*}

{We separate the choice of the capacity expansion from the allocation under the
expanded capacity. If the capacity is expanded from $c$ to $c+\xi$, the
largest attainable reward is $V_r(c+\xi,\eta)$, while the expansion incurs
the cost $\norm{\xi}_2^2/(2\kappa)$. For fixed $c$ and $\eta$, define}
\begin{equation*}
    {G_{c,\eta}(\xi)
    :=V_r(c+\xi,\eta)-\frac{1}{2\kappa}\norm{\xi}_2^2,
    \qquad \xi\ge0.}
\end{equation*}
{The function $V_r(c+\xi,\eta)$ is continuous and concave in $\xi$, and it
is bounded above for fixed $\eta$. Hence, $G_{c,\eta}(\xi)$ tends to
$-\infty$ as $\norm{\xi}_2$ tends to infinity. Moreover, $G_{c,\eta}$ is
$1/\kappa$-strongly concave. Therefore, it has a unique maximizer, denoted by
$\xi^\star(c,\eta)$, or simply by $\xi^\star$ when $c$ and $\eta$ are fixed.}

{The next lemma shows that maximizing $G_{c,\eta}$ is equivalent to maximizing
$f_c$ over $\cX(\eta)$. It also expresses the optimal solution set of the
penalized problem as the optimal solution set of a fluid LP and decomposes the
optimization gap into two terms.}

{For the penalized problem, define}
\begin{equation*}
    {f_{c,\eta}^*:=\max_{y\in\cX(\eta)}f_c(y),
    \qquad
    \cS(c,\eta):=\argmax_{y\in\cX(\eta)}f_c(y),}
\end{equation*}
{and let $r_{\max}:=\max_{j,a}r_{ja}$.}

\begin{lemma}
\label{lem:penalty-gap-decomposition}
{Let $c\in\R_+^d$, let $\eta$ be a probability vector, and let $x\in\cX(\eta)$.  Define $\Gamma:=f_{c,\eta}^*-f_c(x)$.} Then the following statements hold.
\begin{enumerate}[label=\textup{(\roman*)},leftmargin=2.4em]
\item The optimal value of the penalized problem satisfies
\begin{equation*}
    {f_{c,\eta}^*=\max_{\xi\ge0}G_{c,\eta}(\xi)=G_{c,\eta}(\xi^\star).}
\end{equation*}

\item Every $y\in\cS(c,\eta)$ satisfies
\begin{equation*}
    {\xi_y=\xi^\star(c,\eta).}
\end{equation*}
Moreover, the optimal solution set of the penalized problem satisfies
\begin{equation}\label{eq:lifted-capacity}
    {\cS(c,\eta)
    =\argmax_{y\in\cP(c+\xi^\star(c,\eta),\eta)}r^Ty.}
\end{equation}

\item The optimization gap satisfies
\begin{equation*}
\begin{aligned}
{\Gamma
=\left[G_{c,\eta}(\xi^\star)-G_{c,\eta}(\xi_x)\right]
+\left[V_r(c+\xi_x,\eta)-r^Tx\right].}
\end{aligned}
\end{equation*}

\item The first term in part {(iii)} satisfies
\begin{equation*}
    {G_{c,\eta}(\xi^\star)-G_{c,\eta}(\xi_x)}
    \ge\frac{1}{2\kappa}\norm{\xi_x-\xi^\star}_2^2.
\end{equation*}

\item The two expansions satisfy
\begin{equation*}
    \norm{\xi^\star}_2\le\sqrt{2\kappa r_{\max}}
\end{equation*}
and
\begin{equation*}
    \norm{\xi_x}_2
    \le\sqrt{2\kappa\Gamma}+\sqrt{2\kappa r_{\max}}.
\end{equation*}
\end{enumerate}
\end{lemma}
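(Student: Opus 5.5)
The whole argument rests on one identity: for every $y\in\cX(\eta)$ and every $\xi\ge0$ with $Wy\le c+\xi$,
\begin{equation*}
    f_c(y)\le r^Ty-\frac1{2\kappa}\norm{\xi}_2^2\le G_{c,\eta}(\xi),
\end{equation*}
and both inequalities are equalities when $\xi=\xi_y$ and $y$ is optimal in $\cP(c+\xi_y,\eta)$. The first inequality holds because $\pos{Wy-c}\le\xi$ coordinatewise and both vectors are nonnegative, so $\norm{\pos{Wy-c}}_2\le\norm{\xi}_2$. The second holds because $y\in\cP(c+\xi,\eta)$.

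\emph{Part (i).} Taking the maximum over $y$ gives $f_{c,\eta}^*\le\max_{\xi\ge0}G_{c,\eta}(\xi)=G_{c,\eta}(\xi^\star)$. For the reverse inequality, $\cP(c+\xi^\star,\eta)$ is nonempty because $\eta\ge0$ and the rejection allocation $x_{j0}=\eta_j$ is feasible. Pick $y^\star$ attaining $V_r(c+\xi^\star,\eta)$. Then $\xi_{y^\star}\le\xi^\star$, so
\begin{equation*}
    f_c(y^\star)\ge V_r(c+\xi^\star,\eta)-\norm{\xi^\star}_2^2/(2\kappa)=G(\xi^\star),
\end{equation*}
which proves (i).

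\emph{Part (ii).} For $y\in\cS(c,\eta)$ the chain gives
\begin{equation*}
    G(\xi^\star)=f_c(y)=r^Ty-\norm{\xi_y}^2/(2\kappa)\le G(\xi_y)\le G(\xi^\star).
\end{equation*}
All inequalities are therefore equalities. Uniqueness of the maximizer, which follows from strong concavity, gives $\xi_y=\xi^\star$, and $r^Ty=V_r(c+\xi^\star,\eta)$ with $y\in\cP(c+\xi^\star,\eta)$. This gives the inclusion $\subseteq$ in \eqref{eq:lifted-capacity}. Conversely, if $y$ is LP-optimal in $\cP(c+\xi^\star,\eta)$, the argument of (i) gives $f_c(y)\ge G(\xi^\star)=f^*$, so $y\in\cS$.

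\emph{Part (iii).} This is an algebraic identity. Write
\begin{equation*}
    f_c(x)=r^Tx-\norm{\xi_x}^2/(2\kappa)=G(\xi_x)-\bigl[V_r(c+\xi_x,\eta)-r^Tx\bigr],
\end{equation*}
and use $f^*=G(\xi^\star)$ from (i).

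\emph{Part (iv).} Since $V_r(c+\cdot,\eta)$ is concave, $G+\norm{\cdot}^2/(2\kappa)$ is concave, i.e. $G$ is $1/\kappa$-strongly concave on the convex set $\R_+^d$. Let $g$ be a supergradient of $G$ at $\xi^\star$ such that $g^T(\xi-\xi^\star)\le 0$ on $\R_+^d$; this is the first-order optimality condition of a concave maximization over a convex set. Strong concavity then gives $G(\xi)\le G(\xi^\star)-\norm{\xi-\xi^\star}^2/(2\kappa)$. To justify the supergradient cleanly, one can note that $V_r$ is finite and concave on the whole of $\R^d$ wherever feasible. Alternatively, I would use the elementary argument of evaluating $G$ along the segment $\xi^\star+\theta(\xi-\xi^\star)$, expanding the quadratic exactly, and letting $\theta\to0$. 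This avoids subdifferential technicalities and is the step I would take most care with.

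\emph{Part (v).} The bounds follow from crude value estimates. We have $0\le V_r\le r_{\max}\sum_j\eta_j=r_{\max}$, and $G(0)=V_r(c,\eta)\ge0$ by feasibility of rejection. Hence
\begin{equation*}
    \norm{\xi^\star}^2/(2\kappa)=V_r(c+\xi^\star,\eta)-G(\xi^\star)\le r_{\max}-0,
\end{equation*}
which gives the first bound. For $\xi_x$, combine (iii) and (iv) with the nonnegativity of the second bracket in (iii). This yields $\Gamma\ge\norm{\xi_x-\xi^\star}^2/(2\kappa)$. The triangle inequality then gives
\begin{equation*}
    \norm{\xi_x}\le\sqrt{2\kappa\Gamma}+\sqrt{2\kappa r_{\max}}.
\end{equation*}
The second bracket is nonnegative because $x\in\cP(c+\xi_x,\eta)$.
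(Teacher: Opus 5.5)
Your proposal is correct and follows essentially the same route as the paper. The paper phrases (i)–(ii) through the joint maximization over $(y,\xi)$ subject to $Wy\le c+\xi$, which is exactly your inequality chain, and it argues (iii)–(v) identically. Your extra details — nonemptiness via the rejection allocation, and the segment argument showing that strong concavity yields the bound at the constrained maximizer $\xi^\star$ in (iv) — are sound elaborations of steps the paper asserts without proof.
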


{The proof is given in Appendix~\ref{app:penalty-gap-decomposition}.
Part~(iv) bounds the difference between the two capacity expansions in terms
of the optimization gap. Part~(v) shows that the optimal capacity expansion
is small when $\kappa$ is small. Together with part~(ii), these bounds allow
us to compare the penalized problem with a fluid LP whose capacity is close
to $c$.}

\subsection{Sensitivity of optimal solutions}

{The regret analysis relies crucially on sensitivity bounds for the optimal
solutions of the fluid LP when the right-hand sides of its constraints change.
The next two lemmas follow directly from Theorems~2.4 and~2.2, respectively,
of \citet{mangasarian_shiau_1987}. Related sensitivity bounds have also been
used by \citet{vera_banerjee_2021} and \citet{jiang_ma_zhang_2025} in their
analyses of online algorithms without nondegeneracy assumptions.}

{To simplify the presentation, for $y\in\R^K$ and a nonempty set $D\subseteq\R^K$, define}
\begin{equation*}
    {\dist_2(y,D)
    :=\inf_{z\in D}\norm{y-z}_2.}
\end{equation*}
{For two nonempty compact sets $D_1,D_2\subseteq\R^K$, define their Euclidean Hausdorff distance by}
\begin{equation*}
\begin{aligned}
    {d_H(D_1,D_2)
    :=\max\left\{
        \sup_{y\in D_1}\dist_2(y,D_2),
        \sup_{z\in D_2}\dist_2(z,D_1)
    \right\}.}
\end{aligned}
\end{equation*}
{Thus, $d_H(D_1,D_2)\le\varepsilon$ means that every element of either set is within Euclidean distance $\varepsilon$ of an element of the other set.}

\begin{lemma}
\label{lem:optimal-face-sensitivity}
{There is a finite constant $H_1\ge1$, depending only on $W$ and the action
menus, such that, for $c,c'\in\R_+^d$, $\eta,\eta'\in\R_+^m$, and every
objective vector $\widetilde r\in\R^K$,}
\begin{equation*}
\begin{aligned}
d_H\left(
    \argmax_{y\in\cP(c,\eta)}\widetilde r^Ty,
    \argmax_{y\in\cP(c',\eta')}\widetilde r^Ty
\right)
\le
H_1
\sqrt{
    \norm{c-c'}_2^2+
    \norm{\eta-\eta'}_2^2
}.
\end{aligned}
\end{equation*}
\end{lemma}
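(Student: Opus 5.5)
The plan is to obtain the statement directly from the Lipschitz continuity theorem for optimal solution sets of linear programs with perturbed right-hand sides, namely Theorem~2.4 of \citet{mangasarian_shiau_1987}. First I will write the fluid LP in the standard form used there. The constraint matrix is
\begin{equation*}
    M=\begin{pmatrix} W\\ -I_K\end{pmatrix}
\end{equation*}
for the inequality constraints $Wy\le c$ and $-y\le 0$. The equality matrix $E\in\R^{m\times K}$ has entries $E_{j,(i,a)}=\one\{i=j\}$. The right-hand side is the vector $(c,0,\eta)$.

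The point to emphasize is that $M$ and $E$ depend only on $W$ and the action menus, since $E$ encodes the block structure given by $K_1,\ldots,K_m$. They do not depend on the objective $\widetilde r$ or on the right-hand sides. Mangasarian and Shiau's constant depends only on the constraint matrices, and it is uniform over the objective and over all right-hand sides for which both programs are solvable. Hence the same constant $H_1$ serves for all $\widetilde r$, $c,c'$, and $\eta,\eta'$. Their bound is stated in terms of the norm of the right-hand-side difference $(c-c',\,0,\,\eta-\eta')$, whose Euclidean norm is exactly $\sqrt{\norm{c-c'}_2^2+\norm{\eta-\eta'}_2^2}$. If their bound uses a different norm, equivalence of norms on the finite-dimensional spaces $\R^K$ and $\R^{d+K+m}$ only changes the constant. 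Finally I will enlarge the constant if necessary so that $H_1\ge1$.

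Second, I must check the solvability hypothesis. This is needed because the theorem only compares optimal sets that are nonempty. For $c\ge0$ and $\eta\ge0$, the point with $y_{j0}=\eta_j$ and $y_{ja}=0$ for $a\ne0$ lies in $\cP(c,\eta)$, because $w_{j0}=0$; so $\cP(c,\eta)$ is nonempty. It is also compact, since $0\le y_{ja}\le\eta_j$. Therefore every linear objective $\widetilde r$ attains its maximum, and each $\argmax$ set is a nonempty compact polytope. The Hausdorff distance in the statement is then well defined, and the theorem applies for every $\widetilde r\in\R^K$, including objectives with negative entries.

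I expect the only real obstacle to be confirming that the cited constant is genuinely independent of $\widetilde r$; the rest is bookkeeping. If I wanted to check this rather than cite it, I would reproduce the argument as follows. Optimality of $y$ is equivalent to complementary slackness with some vertex of the dual feasible region, and that region does not depend on the right-hand side. Each vertex determines an index set: the coordinates of $y$ forced to zero and the rows of $W$ forced to be tight. The corresponding optimal face is then the solution set of a linear system whose matrix is a submatrix of $(M;E)$. Hoffman's bound for such a system has a constant that depends only on that submatrix, and there are finitely many submatrices. Taking the maximum of the Hoffman constants over all of them gives a constant independent of $\widetilde r$, which is the content of the cited theorem.
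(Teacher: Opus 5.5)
Your proposal is correct and takes the same route as the paper, which obtains this lemma directly from Theorem~2.4 of Mangasarian and Shiau, noting that the constraint matrices of $\cP(c,\eta)$ depend only on $W$ and the action menus. Your nonemptiness and compactness check via the rejection action ($w_{j0}=0$) is a useful detail that the paper leaves implicit.

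One caution on your optional self-contained sketch: a single Hoffman bound does not suffice as written. The face system comes from a dual vertex optimal at $(c',\eta')$, and a point optimal at $(c,\eta)$ need not nearly satisfy that vertex's tightness equations, so the Hoffman residual is not controlled. To repair this, split the segment between the two right-hand sides into pieces on each of which one dual vertex is optimal at both endpoints, apply Hoffman on each piece, and chain the bounds.
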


\begin{lemma}
\label{lem:reward-level-sensitivity}
{There is a finite constant $H_3\ge1$, depending only on $W$, $r$, and the
action menus, such that, for $c\in\R_+^d$, $\eta\in\R_+^m$, and
$u,u'\ge-V_r(c,\eta)$,}
\begin{equation*}
\begin{aligned}
d_H\Bigl(
    \{y\in\cP(c,\eta):-r^Ty\le u\},
    \{y\in\cP(c,\eta):-r^Ty\le u'\}
\Bigr)
\le H_3|u-u'|.
\end{aligned}
\end{equation*}
\end{lemma}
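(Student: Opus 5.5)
The plan is to read both level sets as solution sets of one fixed system of linear inequalities, with only the right-hand side varying, and then apply the Lipschitz (Hoffman-type) bound for such systems in Theorem~2.2 of \citet{mangasarian_shiau_1987}.

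\textbf{Step 1: a common constraint matrix.} Define the matrix $A$ by stacking the blocks $W$, the rows encoding $\sum_{a\in\mathcal A_j}y_{ja}\le\eta_j$ and $-\sum_{a\in\mathcal A_j}y_{ja}\le-\eta_j$ for $j\in[m]$, the block $-I_K$, and the single row $-r^T$. Then
\begin{equation*}
    S(u):=\{y\in\cP(c,\eta):-r^Ty\le u\}=\{y\in\R^K:Ay\le b(u)\},
\end{equation*}
where $b(u)=(c,\eta,-\eta,0,u)$. The matrix $A$ depends only on $W$, $r$ and the action menus. The right-hand side depends on $(c,\eta,u)$, and $b(u)$ and $b(u')$ differ only in the last coordinate.

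\textbf{Step 2: nonemptiness and compactness.} Suppose $u\ge-V_r(c,\eta)$. Then $V_r(c,\eta)$ is finite, so $\cP(c,\eta)$ is nonempty. Any maximizer $y^*$ of $r^Ty$ over $\cP(c,\eta)$ satisfies $-r^Ty^*=-V_r(c,\eta)\le u$, so $y^*\in S(u)$ and $S(u)\neq\emptyset$. The same argument applies to $u'$. Both sets are closed, and they are bounded because $0\le y_{ja}\le\eta_j$. Hence the Hausdorff distance is well defined.

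\textbf{Step 3: the Hoffman bound.} The cited theorem gives a constant $H_3$ depending only on $A$ with the following property: for every right-hand side $b'$ for which $\{Ay\le b'\}$ is nonempty,
\begin{equation*}
    \dist_2\bigl(y,\{z:Az\le b'\}\bigr)\le H_3\norm{\pos{Ay-b'}}_2\quad\text{for all }y.
\end{equation*}
Take $y\in S(u)$ and $b'=b(u')$. Every row of $Ay\le b(u')$ except the last holds exactly, and the last row has violation $\pos{-r^Ty-u'}\le\pos{u-u'}\le|u-u'|$. Hence $\dist_2(y,S(u'))\le H_3|u-u'|$. Exchanging $u$ and $u'$ gives the other half of the Hausdorff distance. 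We may enlarge $H_3$ so that $H_3\ge1$.

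\textbf{Main obstacle.} The essential point is that $H_3$ must be uniform over all $c$, $\eta$, $u$, and $u'$. This is exactly why we apply the Hoffman constant of the fixed matrix $A$, which is independent of the right-hand side, rather than some local argument. Beyond this, the only thing to check is that the nonemptiness hypothesis required by the theorem holds, and Step~2 supplies it through the condition $u,u'\ge-V_r(c,\eta)$.
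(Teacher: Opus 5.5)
Your proposal is correct and takes the same route as the paper: the paper simply says the lemma follows directly from Theorem~2.2 of \citet{mangasarian_shiau_1987}, applied to a fixed linear system whose right-hand side changes only in the reward-level coordinate. Your stacked constraint matrix, the nonemptiness check via a maximizer of $r^Ty$, and the bound on the single violated row supply the details the paper leaves implicit; nonemptiness of $\cP(c,\eta)$ is in fact automatic, because the all-rejection allocation is always feasible.
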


{In particular, for any $z\in\cP(c,\eta)$, take $u=-r^Tz$ and $u'=-V_r(c,\eta)$. The first set in Lemma~\ref{lem:reward-level-sensitivity} contains $z$, and the second set is $\argmax_{y\in\cP(c,\eta)}r^Ty$. Therefore,}
\begin{equation}\label{eq:optimal-face-error-bound}
\dist_2\left(
    z,
    \argmax_{y\in\cP(c,\eta)}r^Ty
\right)
\le
H_3\bigl[V_r(c,\eta)-r^Tz\bigr].
\end{equation}

{The optimal solution set of the penalized problem also changes when $c$ and
$\eta$ change. The next lemma bounds this change. Let}
\begin{equation*}
    {w_{\max}:=\max_{j,a}\norm{w_{ja}}_2,
    \qquad
    H_4:=H_1(2+w_{\max}\sqrt m).}
\end{equation*}

\begin{lemma}
\label{lem:optimizer-set-sensitivity}
{For $c,c'\in\R_+^d$ and probability vectors $\eta,\eta'\in\R_+^m$, we have}
\begin{equation*}
d_H\bigl(
    \cS(c,\eta),\cS(c',\eta')
\bigr)
\le
{H_4}\left(
    \norm{c-c'}_2+
    \norm{\eta-\eta'}_2
\right).
\end{equation*}
\end{lemma}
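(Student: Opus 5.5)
By Lemma~\ref{lem:penalty-gap-decomposition}(ii), $\cS(c,\eta)=\argmax_{y\in\cP(c+\xi^\star(c,\eta),\eta)}r^Ty$, and similarly for $(c',\eta')$. Applying Lemma~\ref{lem:optimal-face-sensitivity} with $\widetilde r=r$ to the expanded capacities $c+\xi^\star$ and $c'+\xi'^\star$, where $\xi^\star=\xi^\star(c,\eta)$ and $\xi'^\star=\xi^\star(c',\eta')$, gives
\begin{equation*}
d_H\bigl(\cS(c,\eta),\cS(c',\eta')\bigr)\le H_1\bigl(\norm{c-c'}_2+\norm{\xi^\star-\xi'^\star}_2+\norm{\eta-\eta'}_2\bigr).
\end{equation*}
So it suffices to bound the change in the optimal expansion by
\begin{equation*}
\norm{\xi^\star-\xi'^\star}_2\le \norm{c-c'}_2+w_{\max}\sqrt m\,\norm{\eta-\eta'}_2 .
\end{equation*}
Inserting this yields the constant $H_1(2+w_{\max}\sqrt m)=H_4$, with the factor multiplying $\norm{c-c'}_2$ equal to $2H_1$, which is within $H_4$.

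To bound $\norm{\xi^\star-\xi'^\star}_2$, I would use the first-order characterization of the penalized optimum. Pick $y\in\cS(c,\eta)$ and $y'\in\cS(c',\eta')$; by Lemma~\ref{lem:penalty-gap-decomposition}(ii), $\xi^\star=[Wy-c]_+$ and $\xi'^\star=[Wy'-c']_+$. Since $[\cdot]_+$ is nonexpansive,
\begin{equation*}
\norm{\xi^\star-\xi'^\star}_2\le\norm{Wy-Wy'-(c-c')}_2,
\end{equation*}
but this leaves a term $\norm{W(y-y')}_2$ that is not directly controlled. Instead, I would argue through the concave functions $G$. Write $g(\xi)=V_r(\xi,\eta)$, so that $\xi^\star$ maximizes $g(c+\xi)-\norm{\xi}_2^2/(2\kappa)$ over $\xi\ge0$. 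Changing variables to $u=c+\xi$, the point $u^\star=c+\xi^\star$ maximizes $g(u)-\norm{[u-c]_+}^2/(2\kappa)$. This looks like a proximal step, which motivates comparing the optimality conditions at two instances.

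\textbf{Comparing the optimality conditions} (the main step). For fixed $\eta$, I would use monotonicity. The optimality condition says $\xi^\star/\kappa-\lambda\in\partial V_r(c+\xi^\star,\eta)$ with $\lambda\ge0$ and $\lambda_i\xi^\star_i=0$. An equivalent and cleaner route is that $\xi^\star/\kappa=\theta^\star$, an optimal dual price vector for the expanded-capacity LP. Indeed, the KKT conditions for $f_c$ over $\cX(\eta)$ show that $\theta=[Wy-c]_+/\kappa$ serves as a dual price. Hence $\xi^\star=\kappa\theta^\star$, and $\theta^\star$ minimizes the strongly convex dual
\begin{equation*}
D(\theta)=c^T\theta+\sum_j\eta_j\max_a(r_{ja}-w_{ja}^T\theta)+\frac{\kappa}{2}\norm{\theta}_2^2,\qquad \theta\ge0,
\end{equation*}
which is $\kappa$-strongly convex.

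\textbf{Perturbation bound.} Perturbing $(c,\eta)$ changes $D$ by the function
\begin{equation*}
\Delta(\theta)=(c'-c)^T\theta+\sum_j(\eta'_j-\eta_j)\max_a(r_{ja}-w_{ja}^T\theta).
\end{equation*}
This function is Lipschitz with constant at most $\norm{c-c'}_2+\sum_j|\eta_j-\eta'_j|\,w_{\max}\le\norm{c-c'}_2+w_{\max}\sqrt m\norm{\eta-\eta'}_2$. The standard argument for minimizers of a $\kappa$-strongly convex function perturbed by an $\ell$-Lipschitz function then gives $\norm{\theta^\star-\theta'^\star}_2\le \ell/\kappa$. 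Multiplying by $\kappa$ yields exactly the desired bound on $\norm{\xi^\star-\xi'^\star}_2$.

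The main obstacle is rigorously justifying that $\xi^\star/\kappa$ is the unique minimizer of this regularized dual. I would do this through Lagrangian duality for $\max_{y\in\cX(\eta)}f_c(y)$, writing the penalty as $-\max_{\theta\ge0}\{\theta^T(Wy-c)-\kappa\norm{\theta}^2/2\}$ and exchanging max and min over the compact set $\cX(\eta)$ by Sion's theorem. The maximizing $\theta$ for each $y$ is $[Wy-c]_+/\kappa$, and the saddle-point structure then forces it to equal the unique dual minimizer.
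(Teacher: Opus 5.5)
Your proposal is correct and follows essentially the paper's route: you identify $\xi^\star(c,\eta)=\kappa\lambda^\star(c,\eta)$ with the minimizer of the $\kappa$-strongly convex regularized dual $Q_{c,\eta}$ via Sion's theorem, bound the shift of that minimizer by $\frac1\kappa\left(\norm{c-c'}_2+w_{\max}\norm{\eta-\eta'}_1\right)$, and apply Lemma~\ref{lem:optimal-face-sensitivity} to the lifted capacities $c+\xi^\star$. The differences are minor and yield the same constant $H_4$. You obtain the dual shift from a strong-convexity/Lipschitz-perturbation argument, whereas the paper uses variational inequalities with monotonicity of the subdifferential. You also read off $\theta^\star=[Wy^*-c]_+/\kappa$ directly from the saddle point of the Lagrangian, whereas the paper goes through the auxiliary function $\Psi_{c,\eta}$.
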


{The proof is given in Appendix~\ref{app:optimizer-set-sensitivity}. It uses a primal-dual representation of the penalized problem to control the change in $c+\xi^\star(c,\eta)$, and then applies Lemma~\ref{lem:optimal-face-sensitivity}.}

{None of these sensitivity bounds requires nondegeneracy. They compare entire solution sets. Therefore, they remain valid when an optimal solution set is a face or an optimal basis changes as $c$ and $\eta$ change.}

\section{Regret analysis}\label{sec:regret-analysis}

{We prove Theorem~\ref{thm:regret} by bounding the total expected one-period loss. When the estimated arrival probabilities and the distribution of the remaining requests are both close to the true arrival probabilities, we bound the loss in that period by a constant times the squared distance from the updated solution to the optimal solution set of the current penalized problem.}

{To bound the sum of these expected squared distances, we prove that the update of the coordinates associated with the observed type contracts, in conditional expectation, a weighted distance to the optimal solution set. When all types are equally likely and the optimization problem is fixed, a related contraction argument for randomized coordinate updates can be found in \citet{nesterov_2012}. Here, the type probabilities need not be equal, and both the resource target and the estimated arrival probabilities change over time. We bound how much the rescaling changes the solution and use the sensitivity bounds in Section~4.2 to bound the change in the optimal solution set. The squared changes have a bounded sum, so the contraction yields a bound on the sum of the expected squared distances. We use the uniform loss bound and concentration inequalities to bound the total expected loss when the two arrival distributions are not close to the true arrival probabilities. We begin with the standard analysis that reduces the total regret to the sum of the one-period losses.}

\subsection{\texorpdfstring{{One-period loss}}{One-period loss}}\label{subsec:one-period-loss}

The fluid LP values before and after period $t$ allow us to isolate the loss caused by the current action. For an arrival sample path $J_1,\ldots,J_T$, let $N_{t,j}$ be the number of type $j$ requests among $J_t,\ldots,J_T$, and write $N_t=(N_{t,1},\ldots,N_{t,m})$. For a feasible policy $\mu$, define its {one-period loss} in period $t$ by
\begin{equation}\label{eq:one-period-loss}
\begin{aligned}
    R_t^{\mu}=\Phi(C_t,N_t)-r_{J_t U_t}
    -\Phi(C_t-w_{J_tU_t},N_{t+1}).
\end{aligned}
\end{equation}
{This is the decrease in the fluid LP value after fixing the current action $U_t$, net of the reward earned in period $t$.}
The one-period losses give the following sample-path bound.

\begin{lemma}\label{lem:loss-telescoping}
Let $\mu$ be any feasible policy.  For every sample path of arrivals, we have
\begin{equation*}
    \OPT_T-\REV_T^\mu \le\sum_{t=1}^{T}R_t^{\mu}.
\end{equation*}
\end{lemma}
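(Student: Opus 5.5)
The plan is a telescoping argument over the fluid LP values along the sample path. Summing the definition \eqref{eq:one-period-loss} over $t=1,\ldots,T$ gives
\begin{equation*}
    \sum_{t=1}^{T}R_t^{\mu}
    =\sum_{t=1}^{T}\Bigl[\Phi(C_t,N_t)-\Phi(C_t-w_{J_tU_t},N_{t+1})\Bigr]-\REV_T^\mu .
\end{equation*}
By the policy's capacity update, $C_{t+1}=C_t-w_{J_tU_t}$. Hence the bracketed terms telescope to $\Phi(C_1,N_1)-\Phi(C_{T+1},N_{T+1})$. Here $C_1=B_T$ and $N_1=N$ is the realized demand vector, and $N_{T+1}=0$ because no requests remain after period $T$.

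Next I will check that every LP appearing above is feasible, so that each $\Phi$ is a finite number. Since $\mu$ is feasible on every sample path, each remaining capacity $C_t$ and $C_t-w_{J_tU_t}=C_{t+1}$ is nonnegative. For any $C\ge0$ and any $N\in\mathbb Z_+^m$, the allocation that assigns every request to rejection, $z_{j0}=N_j$, is feasible. The objective is also bounded above by $r_{\max}\sum_j N_j$. So every $\Phi$ in the sum is finite.

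For the terminal term, $\Phi(C_{T+1},0)=0$. With $N=0$, the demand constraints together with $z\ge0$ force $z=0$, which gives value $0$. Therefore
\begin{equation*}
    \sum_{t=1}^T R_t^\mu=\Phi(B_T,N)-\REV_T^\mu .
\end{equation*}

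Finally, I will invoke the upper bound from Section~\ref{sec:model}: $\OPT_T\le\Phi(B_T,N)$, because every feasible offline allocation is feasible for the LP relaxation. This yields $\OPT_T-\REV_T^\mu\le\sum_t R_t^\mu$.

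There is no real obstacle here. The only points needing care are feasibility of the intermediate LPs, which is why $C_{t+1}\ge0$ is required, and the convention that $N_{T+1}$ is the zero vector.
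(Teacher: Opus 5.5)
Your proof is correct and follows the paper's argument: telescope using $C_{t+1}=C_t-w_{J_tU_t}$, apply $N_{T+1}=0$ and $\Phi(C_{T+1},0)=0$, and then use $\OPT_T\le\Phi(C_1,N_1)$. Your check that each intermediate LP is feasible and bounded, via the all-rejection allocation, is a harmless addition that the paper leaves implicit.
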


\begin{proof}
    Since $C_{t+1}=C_t-w_{J_tU_t}$, the sum of the {one-period losses} telescopes,
    \[
    \sum_{t=1}^{T}R_t^{\mu}
    =\Phi(C_1,N_1)-\sum_{t=1}^{T}r_{J_tU_t}-\Phi(C_{T+1},N_{T+1})
    =\Phi(C_1,N_1)-\REV_T^\mu
    \]
    because $N_{T+1}=0$ and $\Phi(C_{T+1},0)=0$. The allocation attaining the hindsight optimum is feasible for the fractional program defining $\Phi(C_1,N_1)$, so $\OPT_T\le\Phi(C_1,N_1)$, which implies the desired inequality.
\end{proof}

Taking expectations in Lemma~\ref{lem:loss-telescoping} reduces the proof of Theorem~\ref{thm:regret} to bounding $\sum_{t=1}^T\E R_t^\mu$. {We first express each one-period loss on a per-request scale. For $s>0$, let $c=C/s$ and $\eta=N/s$. Scaling the allocation by $s$ gives}
\[
    {\Phi(C,N)=sV_r(c,\eta).}
\]


{We next define the corresponding loss when the normalized amount assigned in the current period is $h$; later we take $h=1/s_t$. For $0<h\le\eta_j$, fixing $h$ units of type $j$ to action $a$ reduces the resource vector by $hw_{ja}$ and the demand of type $j$ by $h$. Let $\mathbf e_j$ denote the $j$th unit vector. If $c-hw_{ja}\ge0$, define the normalized loss from fixing action $a$ by}
\begin{equation}\label{eq:normalized-action-loss}
\begin{aligned}
    \cR_{ja}(r;c,\eta,h)
    := & \frac1h\Bigl[V_{r}(c,\eta)
    -hr_{ja}-{V_{r}(c-hw_{ja},\eta-h\mathbf e_j)}\Bigr]\\
    = & \frac1h\Bigl[\max_{y\in\cP(c,\eta)}r^Ty-\max_{y\in \cP(c,\eta),~y_{ja}\ge h}r^Ty\Bigr].
\end{aligned}
\end{equation}
{The second equality follows by assigning $h$ units of type $j$ to action $a$ before optimizing the remaining allocation. Thus, the normalized loss is the loss per unit from requiring at least $h$ units of type $j$ to be assigned to action $a$.}

{In period $t$, take $s=s_t=T-t+1$, $C=C_t$, and $N=N_t$, and let $\nu_t=N_t/s_t$. The current request is included in $N_t$, and the action $U_t$ is feasible. Hence $(\nu_t)_{J_t}\ge1/s_t$ and $c_t-w_{J_tU_t}/s_t\ge0$. Substituting these quantities into \eqref{eq:normalized-action-loss} and using $z=s_ty$ gives the exact identity}
\begin{equation}\label{eq:loss-normalization}
    R_t^{\mu}
    =\cR_{J_tU_t}\left(r;c_t,\nu_t,\frac1{s_t}\right),
\end{equation}
{where $c_t=C_t/s_t$.}

{
To bound the right-hand side of \eqref{eq:loss-normalization}, we use two properties of the normalized loss.
The first part of the next lemma identifies when fixing action $a$ has no effect on the fluid LP value. The second part gives a bound independent of $c$, $\eta$, and $h$.

\begin{lemma}\label{lem:normalized-loss-properties}
Suppose that $h>0$, $\eta_j\ge h$, and $c-hw_{ja}\ge0$. Then:
\begin{enumerate}[label=\textup{(\roman*)},leftmargin=2.2em]
\item The normalized loss $\cR_{ja}(r;c,\eta,h)$ equals zero if and only if some optimizer of $\max_{y\in\cP(c,\eta)}r^Ty$ satisfies $y_{ja}\ge h$.
\item The normalized loss satisfies
\begin{equation*}
    0\le\cR_{ja}(r;c,\eta,h)\le H_2(1+\norm r_2),
\end{equation*}
where $H_2:=1+H_1\sqrt{w_{\max}^2+1}$.
\end{enumerate}
\end{lemma}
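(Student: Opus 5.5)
Both parts follow from the second expression for $\cR_{ja}$ in \eqref{eq:normalized-action-loss}. The loss is $\frac1h$ times the gap between maximizing $r^Ty$ over $\cP(c,\eta)$ and over the subset $\{y\in\cP(c,\eta):y_{ja}\ge h\}$. This subset is nonempty: take an optimizer $z'$ of $V_r(c-hw_{ja},\eta-h\mathbf e_j)$, which exists because that polytope contains the point that puts all remaining demand on rejection, and set $y=z'+h\mathbf e_{ja}$. Since the subset is contained in $\cP(c,\eta)$, the gap is nonnegative, which gives the lower bound in (ii).

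For (i), suppose the gap is zero. The restricted maximum is attained by some $y$ with $y_{ja}\ge h$ on the compact restricted polytope. That $y$ has value $V_r(c,\eta)$, so it is an optimizer of the unrestricted problem. Conversely, if some unrestricted optimizer satisfies $y_{ja}\ge h$, it is feasible for the restricted problem, so both maxima coincide.

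For the upper bound in (ii), I will compare $\cP(c,\eta)$ with $\cP(c',\eta')$, where $c'=c-hw_{ja}$ and $\eta'=\eta-h\mathbf e_j$, and use Lemma~\ref{lem:optimal-face-sensitivity} with objective $\widetilde r=r$. Take any $y^*\in\argmax_{\cP(c,\eta)}r^Ty$. The lemma supplies $z\in\argmax_{\cP(c',\eta')}r^Ty$ with
\[
\norm{y^*-z}_2\le H_1\sqrt{h^2\norm{w_{ja}}_2^2+h^2}\le hH_1\sqrt{w_{\max}^2+1}.
\]
The point $z+h\mathbf e_{ja}$ is feasible for the restricted problem, and $r^T(z+h\mathbf e_{ja})=V_r(c',\eta')+hr_{ja}$; this is exactly the first form of the definition. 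The loss times $h$ is therefore
\[
r^Ty^*-r^Tz-hr_{ja}\le \norm r_2\norm{y^*-z}_2\le h\norm r_2H_1\sqrt{w_{\max}^2+1}.
\]
Here I used $hr_{ja}\ge0$, since the rewards are nonnegative. Dividing by $h$ gives $\cR_{ja}\le H_1\sqrt{w_{\max}^2+1}\,\norm r_2\le H_2(1+\norm r_2)$, where $H_2=1+H_1\sqrt{w_{\max}^2+1}$.

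The main thing to check is that Lemma~\ref{lem:optimal-face-sensitivity} applies. This needs $c'\ge0$ and $\eta'\ge0$, which hold by the hypotheses $c-hw_{ja}\ge0$ and $\eta_j\ge h$, and it needs both argmax sets to be nonempty, which holds because rejection makes both polytopes nonempty and they are compact. The constant $H_2$ as stated leaves room for the additive $1$, so the argument does not depend on how tight these estimates are.
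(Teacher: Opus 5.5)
Your proof is correct and follows essentially the same route as the paper. Part (i) and the lower bound in (ii) come from the second expression in \eqref{eq:normalized-action-loss}, and the upper bound comes from Lemma~\ref{lem:optimal-face-sensitivity} applied to optimizers over $\cP(c,\eta)$ and $\cP(c-hw_{ja},\eta-h\mathbf e_j)$. The one cosmetic difference is that you drop the $-hr_{ja}$ term using $r\ge0$, whereas the paper bounds it by $h\norm r_2$. Both versions land inside $H_2(1+\norm r_2)$.
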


The proof is given in Appendix~\ref{app:normalized-loss-properties}.
}

\subsection{\texorpdfstring{{Bounding the one-period loss by distance to the optimal solution set}}{Bounding the one-period loss by distance to the optimal solution set}}\label{subsec:loss-gap}

In this subsection, we bound $R_t^\mu$ in terms of
$\dist_2(\widehat x_t,\cS(c_t,\pi_t))$ when both $\pi_t$ and $\nu_t$
are close to the true arrival probabilities. We first
show that the one-period loss is zero when this distance is sufficiently small.
For larger distances, the result follows from the uniform bound in
Lemma~\ref{lem:normalized-loss-properties}(ii).

Let $K_{\max}:=\max_{j\in[m]}K_j$. For every period $t$, define
\begin{equation*}
    \mathcal G_t
    :=
    \left\{
        \norm{\pi_t-p}_2\le\rho/2,
        \quad
        \norm{\nu_t-p}_2\le\rho/2
    \right\},
\end{equation*}
where
\begin{equation*}
    \rho
    :=
    \frac{p_{\min}}
    {64K_{\max}(H_1+1)}.
\end{equation*}
On $\mathcal G_t$, every coordinate of $\pi_t$ is at least $p_{\min}/2$, and
the triangle inequality gives
\begin{equation*}
    \norm{\nu_t-\pi_t}_2
    \le\norm{\nu_t-p}_2+\norm{\pi_t-p}_2
    \le\rho.
\end{equation*}
The second bound allows us to use Lemma~\ref{lem:optimal-face-sensitivity} to
compare optimal solutions with demands $\pi_t$ and $\nu_t$.

For the policy in Algorithm~\ref{alg:first-order-policy}, we choose the penalty parameter so that
\begin{equation}\label{eq:algorithm-parameters}
    0<\kappa
    \le
    \min\left\{
        1,\,
        \frac{\rho^2}{4(r_{\max}+1+\norm r_2)},\,
        \frac{\zeta^2}{8r_{\max}}
    \right\},
\end{equation}
where
\begin{equation*}
    \zeta
    :=\frac{p_{\min}\min\{w_{ja,i}:w_{ja,i}>0\}}{4K_{\max}}.
\end{equation*}
By Lemma~\ref{lem:penalty-gap-decomposition}(v), the condition on $\kappa$
ensures that
\begin{equation*}
    \norm{\xi^\star(c_t,\pi_t)}_2\le\rho
    \qquad\text{and}\qquad
    \norm{\xi^\star(c_t,\pi_t)}_2\le\zeta/2.
\end{equation*}
The first bound is used when the action proposed by the policy satisfies the
remaining-capacity constraint. The second is used when it does not. In the
latter case, $\zeta$ gives a lower bound on the excess consumption of an
overused resource.

\begin{proposition}
\label{prop:loss-gap}
Let
\begin{equation*}
    \bar s
    :=\left\lceil
        \max\left\{2,\frac{32K_{\max}}{p_{\min}}\right\}
    \right\rceil.
\end{equation*}
The condition $s_t>\bar s$ ensures that, when the selected action is feasible,
an optimal solution of the fluid LP with demand $\nu_t$ assigns more than
$1/s_t$ to this action. When the selected action is infeasible, the same
condition gives the lower bound $\zeta$ on the excess consumption of an
overused resource.
For the policy $\mu$ in Algorithm~\ref{alg:first-order-policy} and every period $t$ with $s_t>\bar s$, on the event $\mathcal G_t$, we have
\begin{equation*}
    R_t^\mu
    \le
    H_2(1+\norm r_2)
    \max\left\{
        \frac{64K_{\max}^2}{p_{\min}^2},
        \frac{4\norm W_2^2}{\zeta^2}
    \right\}
    \dist_2^2\bigl(
        \widehat x_t,\cS(c_t,\pi_t)
    \bigr).
\end{equation*}
\end{proposition}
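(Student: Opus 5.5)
Let $\delta:=\dist_2(\widehat x_t,\cS(c_t,\pi_t))$ and $j=J_t$. The plan is a threshold argument. By Lemma~\ref{lem:normalized-loss-properties}(ii) and \eqref{eq:loss-normalization}, $R_t^\mu\le H_2(1+\norm r_2)$ always. It therefore suffices to show that $R_t^\mu=0$ whenever $\delta<\delta_0$, where $\delta_0:=\min\{p_{\min}/(8K_{\max}),\ \zeta/(2\norm W_2)\}$. When $\delta\ge\delta_0$ we have $1\le\delta^2/\delta_0^2$, and $1/\delta_0^2$ equals the stated maximum. So assume $\delta<\delta_0$ and pick $y\in\cS(c_t,\pi_t)$ with $\norm{\widehat x_t-y}_2=\delta$.

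\emph{Case 1: the selected action $a^*=a_t^*$ is infeasible.} We show this cannot happen when $\delta<\delta_0$. Since $a^*$ maximizes $\widehat x_{t,ja}$ over $K_j$ actions and $\sum_a\widehat x_{t,ja}=\pi_{t,j}\ge p_{\min}/2$, we get $\widehat x_{t,ja^*}\ge p_{\min}/(2K_{\max})$. Infeasibility means $w_{ja^*,i}>C_{t,i}=s_tc_{t,i}$ for some resource $i$ with $w_{ja^*,i}>0$, so $c_{t,i}<w_{ja^*,i}/s_t$. Then
\[
(Wy)_i\ge(W\widehat x_t)_i-\norm W_2\delta\ge w_{ja^*,i}\frac{p_{\min}}{2K_{\max}}-\frac{\zeta}{2}.
\]
Using $s_t>\bar s\ge 32K_{\max}/p_{\min}$, the term $c_{t,i}$ is at most a small fraction of $w_{ja^*,i}\,p_{\min}/K_{\max}$. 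Hence $(\xi_y)_i=(Wy-c_t)_i\ge 2\zeta-\zeta/2-(\text{small})>\zeta/2$, because $w_{ja^*,i}p_{\min}/(2K_{\max})\ge 2\zeta$. But Lemma~\ref{lem:penalty-gap-decomposition}(ii) gives $\xi_y=\xi^\star$, and \eqref{eq:algorithm-parameters} gives $\norm{\xi^\star}_2\le\zeta/2$, a contradiction. The constants need checking, but the margins are designed so this works.

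\emph{Case 2: $U_t=a^*$ is feasible.} By Lemma~\ref{lem:normalized-loss-properties}(i) it suffices to find an optimizer $z$ of $\max_{\cP(c_t,\nu_t)}r^Tz$ with $z_{ja^*}\ge 1/s_t$. First, $y_{ja^*}\ge\widehat x_{t,ja^*}-\delta\ge p_{\min}/(2K_{\max})-p_{\min}/(8K_{\max})=3p_{\min}/(8K_{\max})$. By Lemma~\ref{lem:penalty-gap-decomposition}(ii), $y$ lies in the optimal face of the LP with capacity $c_t+\xi^\star$ and demand $\pi_t$. Lemma~\ref{lem:optimal-face-sensitivity} applied with objective $r$, comparing $(c_t+\xi^\star,\pi_t)$ with $(c_t,\nu_t)$, yields an optimizer $z$ of the latter with
\[
\norm{z-y}_2\le H_1\sqrt{\norm{\xi^\star}_2^2+\norm{\nu_t-\pi_t}_2^2}\le H_1\sqrt2\,\rho,
\]
using $\norm{\xi^\star}_2\le\rho$ and $\norm{\nu_t-\pi_t}_2\le\rho$ on $\mathcal G_t$. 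By the choice of $\rho$, this is at most $p_{\min}/(32K_{\max})$. Thus $z_{ja^*}\ge 3p_{\min}/(8K_{\max})-p_{\min}/(32K_{\max})>p_{\min}/(32K_{\max})>1/s_t$ since $s_t>\bar s$. The hypotheses of the lemma ($\nu_{t,j}\ge1/s_t$ and $c_t-w_{ja^*}/s_t\ge0$) hold by feasibility. Hence $R_t^\mu=0$.

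The main obstacle is Case 1: we must extract a lower bound on the excess $(Wy-c_t)_i$ and contradict $\norm{\xi^\star}\le\zeta/2$. This requires carefully tracking that $c_{t,i}$ is $O(1/s_t)$ while $\widehat x_{t,ja^*}$ is order $p_{\min}/K_{\max}$. Fixing the constants in $\delta_0$ so that both cases close is the remaining bookkeeping.
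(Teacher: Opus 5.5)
Your proof is correct and follows the paper's argument. The threshold $\delta_0$ repackages the paper's three cases: feasible with small distance, which gives zero loss via Lemma~\ref{lem:optimal-face-sensitivity} and Lemma~\ref{lem:normalized-loss-properties}(i); feasible with large distance; and infeasible. Your contradiction in the infeasible case is the contrapositive of the paper's bound $\dist_2(\widehat x_t,\cS(c_t,\pi_t))\ge\zeta/(2\norm W_2)$. One piece of bookkeeping needs tightening. The term $c_{t,i}<w_{ja^*,i}p_{\min}/(32K_{\max})$ is small relative to $w_{ja^*,i}p_{\min}/(2K_{\max})$, not relative to $\zeta$, so the chain should read $(Wy-c_t)_i>\tfrac{15}{16}\cdot 2\zeta-\tfrac{\zeta}{2}>\tfrac{\zeta}{2}$.
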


The proof distinguishes whether the selected action $a_t^*$ satisfies the
remaining-capacity constraint. When it does, we further distinguish whether
$\dist_2(\widehat x_t,\cS(c_t,\pi_t))$ is at most
$p_{\min}/(8K_{\max})$.

\begin{proof}
Fix a period $t$ as in the statement, assume that $\mathcal G_t$ occurs, and write $j=J_t$, $a=a_t^*$, and $x=\widehat x_t$.
The current request is included among those counted by $\nu_t$, so $(\nu_t)_j\ge1/s_t$. Since $a$ is selected from a largest component of $\widehat q_{t,j}$,
\begin{equation}\label{eq:selected-action-lower-bound}
    x_{ja}
    =\pi_{t,j}\widehat q_{t,ja}
    \ge\frac{\pi_{t,j}}{K_j}
    \ge\frac{p_{\min}}{2K_{\max}}.
\end{equation}

Suppose first that the proposed action $a$ satisfies the capacity constraint. Choose $z\in\cS(c_t,\pi_t)$ such that
\begin{equation*}
    \norm{x-z}_2
    =\dist_2\bigl(x,\cS(c_t,\pi_t)\bigr).
\end{equation*}

\medskip\noindent\emph{Case 1.} $\dist_2\bigl(x,\cS(c_t,\pi_t)\bigr)\le p_{\min}/(8K_{\max})$.
We show that some optimal solution of the fluid LP with capacity $c_t$ and
demand $\nu_t$ assigns more than $1/s_t$ to $(j,a)$. Lemma~\ref{lem:normalized-loss-properties}(i)
will then imply that the one-period loss is zero.
\begin{equation*}
    z_{ja}
    \ge x_{ja}-\norm{x-z}_2
    \ge\frac{p_{\min}}{2K_{\max}}
       -\frac{p_{\min}}{8K_{\max}}
    =\frac{3p_{\min}}{8K_{\max}}.
\end{equation*}
By Lemma~\ref{lem:penalty-gap-decomposition}(ii), $z$ is an optimal solution of the fluid LP with capacity $c_t+\xi^\star(c_t,\pi_t)$ and demand $\pi_t$. Part~(v) of the same lemma and \eqref{eq:algorithm-parameters} imply
\begin{equation*}
    \norm{\xi^\star(c_t,\pi_t)}_2
    \le\sqrt{2\kappa r_{\max}}
    \le\rho.
\end{equation*}
Therefore, Lemma~\ref{lem:optimal-face-sensitivity} gives an optimal solution $y$ of the fluid LP with capacity $c_t$ and demand $\nu_t$ such that
\begin{equation*}
\begin{aligned}
    \norm{y-z}_2
    &\le
    H_1\sqrt{
        \norm{\xi^\star(c_t,\pi_t)}_2^2
        +\norm{\pi_t-\nu_t}_2^2
    }\\
    &\le2H_1\rho
    \le\frac{p_{\min}}{32K_{\max}}.
\end{aligned}
\end{equation*}
It follows that
\begin{equation*}
    y_{ja}
    \ge z_{ja}-\norm{y-z}_2
    \ge\frac{11p_{\min}}{32K_{\max}}
    >\frac1{s_t},
\end{equation*}
where the last inequality holds because $s_t>\bar s\ge32K_{\max}/p_{\min}$.
Lemma~\ref{lem:normalized-loss-properties}(i) and \eqref{eq:loss-normalization} then give $R_t^\mu=0$.

\medskip\noindent\emph{Case 2.} $\dist_2\bigl(x,\cS(c_t,\pi_t)\bigr)>p_{\min}/(8K_{\max})$.
In this case, the uniform bound in Lemma~\ref{lem:normalized-loss-properties}(ii) gives
\begin{align*}
    R_t^\mu
    &\le H_2(1+\norm r_2)
       \le H_2(1+\norm r_2)
       \frac{64K_{\max}^2}{p_{\min}^2}
       \dist_2^2\bigl(x,\cS(c_t,\pi_t)\bigr)\\
    &\le H_2(1+\norm r_2)
       \max\left\{
           \frac{64K_{\max}^2}{p_{\min}^2},
           \frac{4\norm W_2^2}{\zeta^2}
       \right\}
       \dist_2^2\bigl(x,\cS(c_t,\pi_t)\bigr).
\end{align*}

Suppose now that the proposed action does not satisfy the capacity constraint. The policy rejects the request. We show that the excess consumption of an overused resource places $x$ at distance at least $\zeta/(2\norm W_2)$ from $\cS(c_t,\pi_t)$. There is a resource coordinate $i$ such that $C_{t,i}<w_{ja,i}$. Since $C_{t,i}\ge0$, we have $w_{ja,i}>0$. By the definition of $\bar s$ and \eqref{eq:selected-action-lower-bound},
\begin{equation*}
    \bigl([Wx-c_t]_+\bigr)_i
    \ge w_{ja,i}x_{ja}-C_{t,i}/s_t
    >w_{ja,i}\left(\frac{p_{\min}}{2K_{\max}}-\frac1{\bar s}\right)
    \ge\frac{p_{\min}w_{ja,i}}{4K_{\max}}
    \ge\zeta.
\end{equation*}
For every $z\in\cS(c_t,\pi_t)$,
Lemma~\ref{lem:penalty-gap-decomposition}(ii) gives
$z\in\cP(c_t+\xi^\star(c_t,\pi_t),\pi_t)$. Hence,
$[Wz-c_t]_+\le\xi^\star(c_t,\pi_t)$ coordinatewise. Part~(v) of the
same lemma gives
\begin{equation*}
    \norm{\xi^\star(c_t,\pi_t)}_2
    \le\sqrt{2\kappa r_{\max}}
    \le\zeta/2,
\end{equation*}
where the last inequality follows from $\kappa\le\zeta^2/(8r_{\max})$ in \eqref{eq:algorithm-parameters}.
For the resource coordinate $i$ above, we have
\begin{align*}
    \frac\zeta2
    &\le\zeta-\norm{\xi^\star(c_t,\pi_t)}_2\\
    &\le\bigl([Wx-c_t]_+\bigr)_i-\bigl([Wz-c_t]_+\bigr)_i\\
    &\le\norm{[Wx-c_t]_+-[Wz-c_t]_+}_2\\
    &\le
    \norm{W(x-z)}_2
    \le
    \norm W_2\norm{x-z}_2.
\end{align*}
The penultimate inequality follows because the map $u\mapsto[u]_+$ is
$1$-Lipschitz.
Taking the infimum over $z\in\cS(c_t,\pi_t)$ shows that
\begin{equation*}
    \dist_2\bigl(x,\cS(c_t,\pi_t)\bigr)
    \ge\frac{\zeta}{2\norm W_2}.
\end{equation*}
Therefore, the uniform bound in Lemma~\ref{lem:normalized-loss-properties}(ii) gives
\begin{align*}
    R_t^\mu
    &\le H_2(1+\norm r_2)
       \frac{4\norm W_2^2}{\zeta^2}
       \dist_2^2\bigl(x,\cS(c_t,\pi_t)\bigr)\\
    &\le H_2(1+\norm r_2)
       \max\left\{
           \frac{64K_{\max}^2}{p_{\min}^2},
           \frac{4\norm W_2^2}{\zeta^2}
       \right\}
       \dist_2^2\bigl(x,\cS(c_t,\pi_t)\bigr),
\end{align*}
which proves the proposition.
\end{proof}

{\subsection{Bounding the distance to the optimal solution set}\label{subsec:gap-potential}}

{We now prove that the sum of the expected squared distances in Proposition~\ref{prop:loss-gap} is bounded by a constant independent of $T$. We first establish a quadratic-growth bound in Lemma~\ref{lem:quadratic-growth}, which will be used to prove contraction of the coordinate update in Lemma~\ref{lem:coordinate-contraction}.}

{
\begin{lemma}\label{lem:quadratic-growth}
{There is an instance-dependent constant $\sigma>0$ such that, for every resource target $c\in\R_+^d$, every probability vector $\eta$, and every $x\in\cX(\eta)$,}
\begin{equation}\label{eq:quadratic-growth}
    {f_{c,\eta}^*-f_c(x)}
    \ge
    {\frac{\sigma}{2}}\dist_2^2\bigl(x,\cS(c,\eta)\bigr).
\end{equation}
One may take
\begin{equation*}
    \sigma
    =\min\left\{
        \frac{1}{2H_3},
        \frac{1}{2\kappa H_1^2}
    \right\}.
\end{equation*}
\end{lemma}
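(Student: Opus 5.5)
The plan is to use the gap decomposition in Lemma~\ref{lem:penalty-gap-decomposition}(iii)--(iv) and to control the two terms separately through the sensitivity lemmas. Fix $c$, $\eta$, and $x\in\cX(\eta)$, and write $\Gamma=f_{c,\eta}^*-f_c(x)$, $\xi^\star=\xi^\star(c,\eta)$, and $\xi_x=[Wx-c]_+$. By parts (iii) and (iv),
\begin{equation*}
    \Gamma\ge\frac{1}{2\kappa}\norm{\xi_x-\xi^\star}_2^2+\Bigl[V_r(c+\xi_x,\eta)-r^Tx\Bigr],
\end{equation*}
and both bracketed terms are nonnegative. The reason is that $x\in\cP(c+\xi_x,\eta)$ by definition of $\xi_x$, so the second term is an optimality gap of a fluid LP. The goal is then to build a chain
\begin{equation*}
    x\to z\in\argmax_{y\in\cP(c+\xi_x,\eta)}r^Ty\to z'\in\argmax_{y\in\cP(c+\xi^\star,\eta)}r^Ty=\cS(c,\eta),
\end{equation*}
where the final equality is \eqref{eq:lifted-capacity}.

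\emph{First link.} Apply the error bound \eqref{eq:optimal-face-error-bound} with capacity $c+\xi_x$ and the point $x$. This gives a point $z$ in the optimal face with $\norm{x-z}_2\le H_3\,[V_r(c+\xi_x,\eta)-r^Tx]\le H_3\Gamma$.

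\emph{Second link.} Apply Lemma~\ref{lem:optimal-face-sensitivity} with the same $\eta$ and the two capacities $c+\xi_x$ and $c+\xi^\star$. This gives $z'\in\cS(c,\eta)$ with $\norm{z-z'}_2\le H_1\norm{\xi_x-\xi^\star}_2\le H_1\sqrt{2\kappa\Gamma}$.

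Combining the two links, $\dist_2(x,\cS(c,\eta))\le H_3\Gamma+H_1\sqrt{2\kappa\Gamma}$. Squaring with $(u+v)^2\le 2u^2+2v^2$ yields
\begin{equation*}
    \dist_2^2\le 2H_3^2\Gamma^2+4\kappa H_1^2\Gamma.
\end{equation*}

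The main obstacle is that the first link is linear in $\Gamma$, not of order $\sqrt{\Gamma}$. This is the usual LP weak-sharpness phenomenon, and it gives quadratic growth only when $\Gamma$ is bounded. To handle it, I would split the two error contributions more carefully than a plain sum allows. Write $\Gamma=\Gamma_1+\Gamma_2$ with $\Gamma_1\ge\norm{\xi_x-\xi^\star}^2/(2\kappa)$ and $\Gamma_2$ the LP gap. Then $\dist\le H_3\Gamma_2+H_1\sqrt{2\kappa\Gamma_1}$, so
\begin{equation*}
    \dist^2\le 2H_3^2\Gamma_2^2+4\kappa H_1^2\Gamma_1.
\end{equation*}
The claimed constant $\sigma=\min\{1/(2H_3),1/(2\kappa H_1^2)\}$ corresponds to proving $\dist^2\le 2H_3\Gamma_2+\ldots$, that is, to bounding $\Gamma_2^2$ by a multiple of $\Gamma_2/H_3$.

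To obtain this, I would exploit that $x\in\cX(\eta)$ is bounded, which forces $\dist_2(x,\cS)\le$ the diameter of $\cX(\eta)$, namely at most $\sqrt2$ for probability vectors $\eta$. So in the regime where $H_3\Gamma_2\ge 1$, the bound $\dist^2\le 2\le 2H_3\Gamma_2$ holds trivially. In the regime $H_3\Gamma_2<1$, we have $H_3^2\Gamma_2^2\le H_3\Gamma_2$. The estimate in this regime would be $\dist\le H_3\Gamma_2+H_1\sqrt{2\kappa\Gamma_1}$ and hence $\dist^2\le 2H_3\Gamma_2+4\kappa H_1^2\Gamma_1\le \max\{2H_3,4\kappa H_1^2\}\Gamma$. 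Comparing with $\sigma/2$ shows the target constant needs $4/\sigma\ge\max\{\ldots\}$, which matches the stated $\sigma$ up to the factor bookkeeping. I would adjust the squaring inequality (for instance, using $\dist\le$ the diameter to replace one factor $\dist$ by a constant) so that the constants come out exactly as stated.
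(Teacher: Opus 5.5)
Your proof is correct and is essentially the paper's argument. You use the same gap decomposition, the error bound \eqref{eq:optimal-face-error-bound} at capacity $c+\xi_x$, Lemma~\ref{lem:optimal-face-sensitivity} to pass to $\cS(c,\eta)$, and boundedness of $\cX(\eta)$ to turn the linear LP error bound into a quadratic one; the paper avoids your case split by using $d_x\le 2$ directly, so that $V_r(c+\xi_x,\eta)-r^Tx\ge d_x/H_3\ge d_x^2/(2H_3)$.

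No further adjustment of constants is needed. Your bound $\dist_2^2\le\max\{2H_3,4\kappa H_1^2\}\Gamma$ already gives the claim, because the actual requirement is $2/\sigma=\max\{4H_3,4\kappa H_1^2\}\ge\max\{2H_3,4\kappa H_1^2\}$. The condition you wrote, with $4/\sigma$, is a slip.
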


\begin{proof}
For the solution $x$ in the statement, let
\[
    d_x:=\dist_2\left(
        x,
        \argmax_{y\in\cP(c+\xi_x,\eta)}r^Ty
    \right),
    \qquad
    d_\xi:=\norm{\xi_x-\xi^\star}_2.
\]
Since $x\in\cP(c+\xi_x,\eta)$, \eqref{eq:optimal-face-error-bound} gives
\[
    {d_x\le H_3\left[V_r(c+\xi_x,\eta)-r^Tx\right]}.
\]
Since $\eta$ is a probability vector, every vector in $\cX(\eta)$ has nonnegative coordinates that sum to one and hence has Euclidean norm at most one. Both $x$ and every optimal solution of the fluid LP over $\cP(c+\xi_x,\eta)$ belong to $\cX(\eta)$. Thus $d_x\le2$, and
\begin{equation}\label{eq:reward-error-growth}
    {V_r(c+\xi_x,\eta)-r^Tx
    \ge\frac{d_x}{H_3}
    \ge\frac{d_x^2}{2H_3}}.
\end{equation}
By \eqref{eq:lifted-capacity}, the set $\cS(c,\eta)$ is the optimal solution set of the fluid LP with capacity $c+\xi^\star$.  Therefore, Lemma~\ref{lem:optimal-face-sensitivity} shows that every optimal solution of the fluid LP with capacity $c+\xi_x$ is within distance $H_1d_\xi$ of $\cS(c,\eta)$.  Consequently,
\begin{equation}\label{eq:distance-optimal-set}
    \dist_2\bigl(x,\cS(c,\eta)\bigr)
    \le d_x+{H_1}d_\xi.
\end{equation}
By Lemma~\ref{lem:penalty-gap-decomposition}, \eqref{eq:reward-error-growth}, and \eqref{eq:distance-optimal-set}, we have
\begin{align*}
{f_{c,\eta}^*-f_c(x)}
&={\left[G_{c,\eta}(\xi^\star)-G_{c,\eta}(\xi_x)\right]
  +\left[V_r(c+\xi_x,\eta)-r^Tx\right]}\\
&\ge
\frac{d_\xi^2}{2\kappa}
+{\frac{d_x^2}{2H_3}}\\
&\ge
{\frac{\sigma}{2}}\dist_2^2\bigl(x,\cS(c,\eta)\bigr).
\end{align*}
The last inequality follows from the definition of $\sigma$ and $(u+v)^2\le2u^2+2v^2$.
\end{proof}
}

{The policy updates the coordinates of type $j$ only when $J=j$, which occurs with probability $p_j$. The weights $1/p_j$ cancel these probabilities when we take the expectation over the updated type. For $x\in\R^K$, define the weighted norm}
\begin{equation*}
    \norm{x}_{p^{-1}}^2
    :=\sum_{j=1}^m\frac{\norm{x_j}_2^2}{p_j},
\end{equation*}
{and, for a nonempty set $D\subseteq\R^K$, define}
\begin{equation*}
    \dist_{p^{-1}}(x,D)
    :=\inf_{z\in D}\norm{x-z}_{p^{-1}}.
\end{equation*}
{This norm is used only in the analysis and is not needed by the policy. Let}
\begin{equation*}
    \alpha_0:=\sqrt{\frac{L}{L+\sigma}},
    \qquad
    \alpha:=\sqrt{1-\frac{p_{\min}}2(1-\alpha_0)^2}<1.
\end{equation*}
{The next lemma shows that $\alpha$ is a contraction factor for the coordinate update.}

\begin{lemma}\label{lem:coordinate-contraction}
Let $c\in\R_+^d$, let $\eta$ be a probability vector, let $x\in\cX(\eta)$, and let $J$ have distribution $p$. Then
\begin{equation}\label{eq:coordinate-contraction}
    \E_J\left[
        \dist_{p^{-1}}^2\bigl(
            \PG_{c,\eta}^{J}(x),\cS(c,\eta)
        \bigr)
    \right]
    \le
    \alpha^2\dist_{p^{-1}}^2\bigl(x,\cS(c,\eta)\bigr).
\end{equation}
\end{lemma}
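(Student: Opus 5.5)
The plan is to reduce the block update to the full projected gradient map $\PG_{c,\eta}$ and to combine two facts about it: a Fejér-type decrease toward every point of $\cS(c,\eta)$, and a linear contraction of the Euclidean distance that comes from Lemma~\ref{lem:quadratic-growth}. Three preliminary observations set this up. First, $\cX(\eta)$ is a Cartesian product of scaled simplices, one for each type. Hence the projection $\Pi_{\cX(\eta)}$ acts blockwise, and $\PG^j_{c,\eta}(x)$ is exactly $x$ with block $j$ replaced by $[\PG_{c,\eta}(x)]_j$. Second, $\cS(c,\eta)$ is nonempty, convex and compact, so the infimum defining $\dist_{p^{-1}}$ is attained. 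Fix $z\in\cS(c,\eta)$ with $\norm{x-z}_{p^{-1}}=\dist_{p^{-1}}(x,\cS(c,\eta))$, and write $y=\PG_{c,\eta}(x)$. Third, because only block $J$ changes, averaging over $J$ with weights $p_j$ cancels the factor $1/p_j$ in the norm:
\begin{equation*}
\E_J\norm{\PG^J_{c,\eta}(x)-z}_{p^{-1}}^2
=\norm{x-z}_{p^{-1}}^2+\norm{y-z}_2^2-\norm{x-z}_2^2 .
\end{equation*}
The left side is an upper bound on the left side of \eqref{eq:coordinate-contraction}. So it remains to show $\norm{x-z}_2^2-\norm{y-z}_2^2\ge\frac{p_{\min}}2(1-\alpha_0)^2\norm{x-z}_{p^{-1}}^2$.

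\emph{Step 1 (Fejér decrease).} I will show $\norm{y-z}_2^2\le\norm{x-z}_2^2-\tfrac12\norm{x-y}_2^2$ for every $z\in\cS(c,\eta)$. Every optimal $z$ is a fixed point of $\PG_{c,\eta}$, by the optimality condition of the projected gradient step. Since $-f_c$ is convex with $L$-Lipschitz gradient, Baillon--Haddad cocoercivity makes $u\mapsto u+\frac1L\nabla f_c(u)$ firmly nonexpansive, that is, $\tfrac12$-averaged. The projection $\Pi_{\cX(\eta)}$ is also firmly nonexpansive. Their composition is therefore averaged with constant $2/3$, which gives the stated inequality with factor $\tfrac12$. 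As a fallback, I would derive the same inequality directly from the descent lemma and the variational inequality for the projection, using the step size $1/L$.

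\emph{Step 2 (contraction of the full step).} I will show that $\dist_2(y,\cS)\le\alpha_0\dist_2(x,\cS)$. Let $\bar z$ be the Euclidean projection of $x$ onto $\cS$. The standard projected-gradient estimate for concave $L$-smooth $f_c$ gives
\begin{equation*}
f^*_{c,\eta}-f_c(y)\le\frac L2\bigl(\norm{x-\bar z}_2^2-\norm{y-\bar z}_2^2\bigr).
\end{equation*}
By Lemma~\ref{lem:quadratic-growth}, the left side is at least $\frac\sigma2\dist_2^2(y,\cS)$. Using $\norm{x-\bar z}_2=\dist_2(x,\cS)$ and $\norm{y-\bar z}_2\ge\dist_2(y,\cS)$, this yields $(L+\sigma)\dist_2^2(y,\cS)\le L\dist_2^2(x,\cS)$, which is the claim. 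Because $\dist_2(\cdot,\cS)$ is $1$-Lipschitz, it follows that $\norm{x-y}_2\ge\dist_2(x,\cS)-\dist_2(y,\cS)\ge(1-\alpha_0)\dist_2(x,\cS)$.

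\emph{Step 3 (combine).} Since $\norm{v}_2^2\ge p_{\min}\norm v_{p^{-1}}^2$ for every $v$, we have $\dist_2^2(x,\cS)\ge p_{\min}\dist_{p^{-1}}^2(x,\cS)=p_{\min}\norm{x-z}_{p^{-1}}^2$. Chaining Steps 1 and 2 with this inequality gives
\begin{equation*}
\norm{x-z}_2^2-\norm{y-z}_2^2\ge\tfrac12(1-\alpha_0)^2p_{\min}\norm{x-z}_{p^{-1}}^2 .
\end{equation*}
Substituting into the identity above yields \eqref{eq:coordinate-contraction} with $\alpha^2=1-\frac{p_{\min}}2(1-\alpha_0)^2$.

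The main obstacle is a mismatch of norms. The Fejér inequality of Step 1 must hold for the weighted-nearest point $z$, while the quadratic growth bound and the contraction of Step 2 control only Euclidean distances. The way around it is to make the Step 1 decrease hold uniformly over \emph{all} optimal $z$ and to express it through the $z$-independent quantity $\norm{x-y}_2$. This requires checking carefully the averagedness constant of $\Pi_{\cX(\eta)}\circ(I+\frac1L\nabla f_c)$, since that constant produces the factor $\tfrac12$.
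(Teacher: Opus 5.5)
Your proposal is correct and follows the paper's proof essentially step for step: the Euclidean contraction of the full step via the Beck-type estimate plus Lemma~\ref{lem:quadratic-growth}, the Fejér inequality with factor $\tfrac12$ valid for every $z\in\cS(c,\eta)$, the averaging identity at the $p^{-1}$-nearest point, and the comparison $\dist_2^2\ge p_{\min}\dist_{p^{-1}}^2$. Your $2/3$-averagedness claim is exactly what the paper verifies by adding the cocoercivity and firm-nonexpansiveness inequalities and using $\norm{a}_2^2+\norm{b}_2^2\ge\frac12\norm{a+b}_2^2$; note, however, that the standard descent-lemma/variational-inequality fallback at step $1/L$ does not by itself produce the $\frac12\norm{x-y}_2^2$ term, because those terms cancel and leave only $\norm{y-z}_2^2\le\norm{x-z}_2^2-\frac2L\bigl(f_{c,\eta}^*-f_c(y)\bigr)$.
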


\begin{proof}
We divide the proof into three steps. First, we show that the full gradient
update contracts the Euclidean distance to $\cS(c,\eta)$. We then prove an
inequality for the distance from the full update to every
$z\in\cS(c,\eta)$. This inequality is needed because the last step chooses
$z$ to minimize the $p^{-1}$-distance rather than the Euclidean distance.
Finally, we average over the updated type to prove
\eqref{eq:coordinate-contraction}.

\medskip\noindent\emph{Step 1.}
Let $x^+=\PG_{c,\eta}(x)$ denote the full gradient update. The projection
argument used in Theorem~9.16(a) of \citet{beck2014Introduction} gives
\begin{equation}\label{eq:full-update-potential}
    f_{c,\eta}^*-f_c(x^+)
    \le
    \frac L2\left[
        \dist_2^2\bigl(x,\cS(c,\eta)\bigr)
        -\dist_2^2\bigl(x^+,\cS(c,\eta)\bigr)
    \right].
\end{equation}
For completeness, we provide a proof of \eqref{eq:full-update-potential} below. Choose $z\in\cS(c,\eta)$ nearest to $x$. Concavity gives
\begin{equation*}
    f_c(z)\le f_c(x)
    +\left\langle\nabla f_c(x),z-x\right\rangle,
\end{equation*}
while the Lipschitz continuity of the gradient gives
\begin{equation*}
    f_c(x^+)\ge f_c(x)
    +\left\langle\nabla f_c(x),x^+-x\right\rangle
    -\frac L2\norm{x^+-x}_2^2.
\end{equation*}
Subtracting the second inequality from the first yields
\begin{equation*}
    f_c(z)-f_c(x^+)
    \le
    \left\langle\nabla f_c(x),z-x^+\right\rangle
    +\frac L2\norm{x^+-x}_2^2.
\end{equation*}
Since $x^+$ is the Euclidean projection of
$x+\nabla f_c(x)/L$ onto $\cX(\eta)$, the optimality condition for the projection implies
\begin{equation*}
    \left\langle\nabla f_c(x),z-x^+\right\rangle
    \le
    L\left\langle x^+-x,z-x^+\right\rangle.
\end{equation*}
Moreover,
\begin{equation*}
    2\left\langle x^+-x,z-x^+\right\rangle
    =\norm{x-z}_2^2-\norm{x^+-x}_2^2-\norm{x^+-z}_2^2.
\end{equation*}
Substituting these two relations into the preceding inequality gives
\begin{align*}
    f_c(z)-f_c(x^+)
    &\le
    \frac L2\left(
        \norm{x-z}_2^2-\norm{x^+-z}_2^2
    \right).
\end{align*}
Because $f_c(z)=f_{c,\eta}^*$, $z$ is nearest to $x$, and
$\norm{x^+-z}_2$ is at least the distance from $x^+$ to $\cS(c,\eta)$, this proves \eqref{eq:full-update-potential}.

Lemma~\ref{lem:quadratic-growth}, applied at $x^+$, gives
\begin{equation*}
    \frac\sigma2\dist_2^2\bigl(x^+,\cS(c,\eta)\bigr)
    \le f_{c,\eta}^*-f_c(x^+).
\end{equation*}
Combining this bound with \eqref{eq:full-update-potential} and rearranging terms gives
\begin{equation*}
    (L+\sigma)\dist_2^2\bigl(x^+,\cS(c,\eta)\bigr)
    \le L\dist_2^2\bigl(x,\cS(c,\eta)\bigr).
\end{equation*}
Therefore,
\begin{equation}\label{eq:full-update-contraction}
    \dist_2\bigl(x^+,\cS(c,\eta)\bigr)
    \le
    \alpha_0\dist_2\bigl(x,\cS(c,\eta)\bigr).
\end{equation}

\medskip\noindent\emph{Step 2.}
For every $z\in\cS(c,\eta)$,
\begin{equation}\label{eq:full-update-fejer}
    \norm{x^+-z}_2^2
    \le
    \norm{x-z}_2^2-\frac12\norm{x^+-x}_2^2.
\end{equation}
To prove it, set
\begin{equation*}
    u=x+\frac1L\nabla f_c(x),
    \qquad
    v=z+\frac1L\nabla f_c(z).
\end{equation*}
Since $-f_c$ is convex with an $L$-Lipschitz gradient, we have
\begin{equation*}
    -\left\langle
        \nabla f_c(x)-\nabla f_c(z),x-z
    \right\rangle
    \ge
    \frac1L\norm{\nabla f_c(x)-\nabla f_c(z)}_2^2.
\end{equation*}
Using the definitions of $u$ and $v$ and expanding the squared norms, this inequality is equivalent to
\begin{equation*}
    \norm{u-v}_2^2
    +\norm{(x-u)-(z-v)}_2^2
    \le\norm{x-z}_2^2.
\end{equation*}
The optimality condition for $z\in\cS(c,\eta)$ implies
\begin{equation*}
    z=\Pi_{\cX(\eta)}\left(z+\frac1L\nabla f_c(z)\right)
    =\Pi_{\cX(\eta)}(v).
\end{equation*}
Thus, $z$ is a fixed point of the full update, while $x^+=\Pi_{\cX(\eta)}(u)$. Firm nonexpansiveness of the Euclidean projection now gives
\begin{equation*}
    \norm{x^+-z}_2^2
    +\norm{(u-x^+)-(v-z)}_2^2
    \le\norm{u-v}_2^2.
\end{equation*}
Adding the preceding two inequalities gives
\begin{align*}
    \norm{x^+-z}_2^2
    &+\norm{(x-u)-(z-v)}_2^2
     +\norm{(u-x^+)-(v-z)}_2^2
    \le \norm{x-z}_2^2.
\end{align*}
The two vectors in the second and third terms on the left sum to $x-x^+$. Hence,
\begin{align*}
    &\norm{(x-u)-(z-v)}_2^2
     +\norm{(u-x^+)-(v-z)}_2^2\\
    &\hspace{35mm}\ge \frac12\norm{x-x^+}_2^2.
\end{align*}
Substituting this bound into the preceding inequality proves \eqref{eq:full-update-fejer}.

\medskip\noindent\emph{Step 3.}
Choose $z\in\cS(c,\eta)$ nearest to $x$ in the $p^{-1}$ norm. For every $j\in[m]$, \eqref{eq:observed-type-update} gives the exact identity
\begin{equation*}
    \norm{\PG_{c,\eta}^{j}(x)-z}_{p^{-1}}^2
    =\norm{x-z}_{p^{-1}}^2
    +\frac1{p_j}\left(
        \norm{x_j^+-z_j}_2^2-\norm{x_j-z_j}_2^2
    \right).
\end{equation*}
Multiplying this equality by $p_j$ and summing over $j$ yields
\begin{align*}
    \E_J\norm{\PG_{c,\eta}^{J}(x)-z}_{p^{-1}}^2
    ={}&\norm{x-z}_{p^{-1}}^2
      +\norm{x^+-z}_2^2-\norm{x-z}_2^2.
\end{align*}
The distance from $\PG_{c,\eta}^{J}(x)$ to $\cS(c,\eta)$ is no larger than its distance to $z$. Since $z$ is nearest to $x$ in the $p^{-1}$ norm, applying \eqref{eq:full-update-fejer} to the preceding identity gives
\begin{equation*}
    \E_J\dist_{p^{-1}}^2\bigl(
        \PG_{c,\eta}^{J}(x),\cS(c,\eta)
    \bigr)
    \le
    \dist_{p^{-1}}^2\bigl(x,\cS(c,\eta)\bigr)
    -\frac12\norm{x^+-x}_2^2.
\end{equation*}
It remains to bound the length of the full update. The triangle inequality gives
\begin{equation*}
    \dist_2\bigl(x,\cS(c,\eta)\bigr)
    \le
    \norm{x-x^+}_2
    +\dist_2\bigl(x^+,\cS(c,\eta)\bigr).
\end{equation*}
Applying \eqref{eq:full-update-contraction} and rearranging terms gives
\begin{equation*}
    \norm{x^+-x}_2
    \ge
    (1-\alpha_0)\dist_2\bigl(x,\cS(c,\eta)\bigr).
\end{equation*}
We next compare the Euclidean norm with the $p^{-1}$ norm. Every $y\in\cS(c,\eta)$ satisfies
\begin{equation*}
    \norm{x-y}_2^2
    =\sum_{j=1}^m\norm{x_j-y_j}_2^2
    \ge
    p_{\min}\sum_{j=1}^m
        \frac{\norm{x_j-y_j}_2^2}{p_j}.
\end{equation*}
Taking the infimum over $y\in\cS(c,\eta)$ gives
\begin{equation*}
    \dist_2^2\bigl(x,\cS(c,\eta)\bigr)
    \ge
    p_{\min}\dist_{p^{-1}}^2\bigl(x,\cS(c,\eta)\bigr).
\end{equation*}
Substituting the last two bounds into the estimate for the coordinate update gives
\begin{align*}
    \E_J\dist_{p^{-1}}^2\bigl(
        \PG_{c,\eta}^{J}(x),\cS(c,\eta)
    \bigr)
    &\le
    \left[1-\frac{p_{\min}}2(1-\alpha_0)^2\right]
    \dist_{p^{-1}}^2\bigl(x,\cS(c,\eta)\bigr)\\
    &=\alpha^2\dist_{p^{-1}}^2\bigl(x,\cS(c,\eta)\bigr),
\end{align*}
which is \eqref{eq:coordinate-contraction}.
\end{proof}

{The resource target and the estimated distribution change from one period to the next. We combine the contraction above with a bound on the change in the optimal solution set.}

{For each resource $i$, let $\bar w_i=\max_{j,a}(w_{ja})_i$ be the maximum consumption of resource $i$ by any action.}

\begin{lemma}\label{lem:target-drift}
{Let $\eta$ be a probability vector. If an action $a$ of type $j$ and a resource capacity vector $C$ satisfy $C-w_{ja}\geq0$, then, for any $s\ge2$,}
\begin{equation*}
    {d_H\left(
        \cS\left(\frac{C}{s},\eta\right),
        \cS\left(\frac{C-w_{ja}}{s-1},\eta\right)
    \right)
    \le\frac{{2H_4}\norm{\bar w}_2}{s-1}.}
\end{equation*}
\end{lemma}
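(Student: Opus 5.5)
The plan is to apply Lemma~\ref{lem:optimizer-set-sensitivity} with $\eta'=\eta$, so that only the change in the resource target matters. This gives
\begin{equation*}
    d_H\left(
        \cS\left(\frac{C}{s},\eta\right),
        \cS\left(\frac{C-w_{ja}}{s-1},\eta\right)
    \right)
    \le H_4\norm{\frac{C}{s}-\frac{C-w_{ja}}{s-1}}_2 .
\end{equation*}
Both targets lie in $\R_+^d$, because $C\ge0$, $C-w_{ja}\ge0$ and $s\ge2$. Hence the lemma applies, and it remains to bound the norm on the right by $2\norm{\bar w}_2/(s-1)$.

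Write the difference as
\begin{equation*}
    \frac{C}{s}-\frac{C-w_{ja}}{s-1}
    =\frac{w_{ja}}{s-1}-\frac{C}{s(s-1)} .
\end{equation*}
By the triangle inequality, its norm is at most
\begin{equation*}
    \frac{\norm{w_{ja}}_2}{s-1}+\frac{\norm C_2}{s(s-1)} .
\end{equation*}
The first term is at most $\norm{\bar w}_2/(s-1)$, since $0\le w_{ja}\le\bar w$ coordinatewise.

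The second term needs a bound $\norm C_2\le s\norm{\bar w}_2$, which does not follow from the hypotheses for an arbitrary $C$. I would handle it in one of two ways.

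\emph{Route (a).} Reduce to the case $C\le s\bar w$ coordinatewise. In the application, $C=C_t$ with $s=s_t$. Any capacity above $s\bar w_i$ on resource $i$ can never bind for the remaining $s$ requests, so that resource is effectively redundant. One would then argue that truncating $C$ to $\min\{C,s\bar w\}$ coordinatewise leaves the relevant solution sets unchanged, or at least that the distance claim can be reduced to the truncated case.

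\emph{Route (b).} Avoid the issue by a more careful use of Lemma~\ref{lem:optimizer-set-sensitivity}. Per-period consumption of any $y\in\cX(\eta)$ satisfies $Wy\le\bar w$, because $\eta$ is a probability vector. So whenever $c_i\ge\bar w_i$, resource $i$ is slack for every $y\in\cX(\eta)$. Both the penalty term and the constraint for resource $i$ vanish, and replacing $c_i$ by $\min\{c_i,\bar w_i\}$ changes neither $f_c$ on $\cX(\eta)$ nor $\cS(c,\eta)$. Thus $\cS(c,\eta)=\cS(\min\{c,\bar w\},\eta)$.

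Route (b) is cleaner, so I would apply Lemma~\ref{lem:optimizer-set-sensitivity} to the truncated targets $\min\{C/s,\bar w\}$ and $\min\{(C-w_{ja})/(s-1),\bar w\}$. Since $u\mapsto\min\{u,\bar w\}$ is $1$-Lipschitz coordinatewise, it suffices to bound each coordinate of the difference of the truncated targets by $2\bar w_i/(s-1)$. This is the step that needs care, and I would split by coordinate.

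\emph{Case $C_i\le s\bar w_i$.} The coordinatewise identity above gives
\begin{equation*}
    \left|\frac{C_i}{s}-\frac{C_i-w_{ja,i}}{s-1}\right|
    \le\frac{w_{ja,i}}{s-1}+\frac{C_i}{s(s-1)}
    \le\frac{2\bar w_i}{s-1}.
\end{equation*}

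\emph{Case $C_i>s\bar w_i$.} Here $C_i/s>\bar w_i$, and $(C_i-w_{ja,i})/(s-1)>(s\bar w_i-\bar w_i)/(s-1)=\bar w_i$. Both truncated coordinates therefore equal $\bar w_i$, and the difference is zero.

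Combining the coordinates gives the norm bound $2\norm{\bar w}_2/(s-1)$. Multiplying by $H_4$ yields the lemma.

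I expect the main obstacle to be the invariance claim of Route (b). It needs $\pos{Wy-c}_i=0$ for all $y\in\cX(\eta)$ when $c_i\ge\bar w_i$. This holds because $(Wy)_i=\sum_{j,a}w_{ja,i}y_{ja}\le\bar w_i\sum_{j,a}y_{ja}=\bar w_i$.
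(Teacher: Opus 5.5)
Your Route (b) is correct and is essentially the paper's proof: you truncate each target coordinate at $\bar w_i$, which leaves $f_c$ and $\cS(c,\eta)$ unchanged on $\cX(\eta)$ because $(Wy)_i\le\bar w_i$. You then bound the truncated targets coordinatewise by $2\bar w_i/(s-1)$ and apply Lemma~\ref{lem:optimizer-set-sensitivity} with $\eta'=\eta$; using the $1$-Lipschitz property of truncation to merge the paper's two subcases for $C_i<s\bar w_i$ into one is a minor streamlining.
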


\begin{proof}
{Since $\eta$ is a probability vector, every $x\in\cX(\eta)$ satisfies $\sum_{j,a}x_{ja}=1$ and hence $(Wx)_i\le\bar w_i$. Therefore, replacing a component of a resource target that exceeds $\bar w_i$ by $\bar w_i$ does not change $f_c$ on $\cX(\eta)$ and does not change its optimal solution set.

Fix resource coordinate $i$. If $C_i/s\ge\bar w_i$, then $C_i-w_{ja,i}\ge(s-1)\bar w_i$, so neither target component affects the optimal solution set. Suppose that $C_i/s<\bar w_i$. If $(C_i-w_{ja,i})/(s-1)\le\bar w_i$, then}
\begin{equation*}
{
\left|\frac{C_i-w_{ja,i}}{s-1}-\frac{C_i}{s}\right|
=
\frac{|C_i-sw_{ja,i}|}{s(s-1)}
\le\frac{C_i+sw_{ja,i}}{s(s-1)}
\le\frac{2\bar w_i}{s-1}.}
\end{equation*}
{If $(C_i-w_{ja,i})/(s-1)>\bar w_i$, then $C_i>(s-1)\bar w_i$, and replacing $(C_i-w_{ja,i})/(s-1)$ by $\bar w_i$ changes the difference between the two target components to}
\begin{equation*}
    {\bar w_i-\frac{C_i}{s}
    <\frac{\bar w_i}{s}
    \le\frac{2\bar w_i}{s-1}.}
\end{equation*}
{Thus, after replacing target components that exceed the maximum possible consumption, the Euclidean distance between the two resource targets is at most $2\norm{\bar w}_2/(s-1)$. Lemma~\ref{lem:optimizer-set-sensitivity} proves the result.}
\end{proof}

{For the policy in Algorithm~\ref{alg:first-order-policy}, define}
\begin{equation*}
    e_t
    :=\dist_{p^{-1}}\bigl(x_t^0,\cS(c_t,\pi_t)\bigr),
    \qquad
    \widehat e_t
    :=\dist_{p^{-1}}\bigl(\widehat x_t,\cS(c_t,\pi_t)\bigr).
\end{equation*}
{Thus, $e_t$ is the distance before the update, while $\widehat e_t$ is the distance after the update.}
{Let $\mathcal F_{t-1}=\sigma(J_1,\ldots,J_{t-1})$ be the arrival history before period $t$. The vectors $x_t^0$, $c_t$, and $\pi_t$ are determined by this history, while $J_t$ has distribution $p$. Therefore, Lemma~\ref{lem:coordinate-contraction} gives}
\begin{equation}\label{eq:conditional-coordinate-contraction}
    \E\left[\widehat e_t^2\mid\mathcal F_{t-1}\right]
    \le\alpha^2e_t^2.
\end{equation}
{We do not condition this inequality on $\mathcal G_t$, because $\mathcal G_t$ depends on the current and future requests through $\nu_t$.}

\begin{lemma}\label{lem:period-gap-bound}
Suppose that $T>\bar s$. There is a finite constant $\bar D$, independent of $T$, such that
\begin{equation}\label{eq:distance-sum}
    \sum_{t=1}^{T-\bar s}\E\widehat e_t^2\le\bar D.
\end{equation}
\end{lemma}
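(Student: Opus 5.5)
The plan is to derive a one-step recursion for $e_t$ from the contraction \eqref{eq:conditional-coordinate-contraction} plus deterministic drift terms that are square-summable, and then sum it.

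\emph{Step 1: relate $e_{t+1}$ to $\widehat e_t$.} Fix $t\le T-\bar s-1$, so that $s_t\ge \bar s+2\ge 3$. Going from $\widehat e_t$ to $e_{t+1}$ changes three things.
\begin{enumerate}[label=\textup{(\alph*)},leftmargin=2.2em]
\item The point changes from $\widehat x_t$ to $x_{t+1}^0$.
\item The target changes from $c_t$ to $c_{t+1}=(C_t-w_{J_tU_t})/(s_t-1)$.
\item The demand changes from $\pi_t$ to $\pi_{t+1}$.
\end{enumerate}
The triangle inequality for $\dist_{p^{-1}}$ gives
\begin{align*}
e_{t+1}\le{}&\widehat e_t+\norm{x_{t+1}^0-\widehat x_t}_{p^{-1}}
+d_H^{p^{-1}}\bigl(\cS(c_t,\pi_t),\cS(c_{t+1},\pi_t)\bigr)\\
&+d_H^{p^{-1}}\bigl(\cS(c_{t+1},\pi_t),\cS(c_{t+1},\pi_{t+1})\bigr).
\end{align*}
Here $d_H^{p^{-1}}$ is the Hausdorff distance in the weighted norm, which is at most $p_{\min}^{-1/2}$ times the Euclidean one.

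For (a), each block satisfies $x^0_{t+1,i}-\widehat x_{t,i}=(\pi_{t+1,i}-\pi_{t,i})\widehat q_{t,i}$, with $\norm{\widehat q_{t,i}}_2\le1$. From \eqref{eq:pi-estimate}, $\sum_i|\pi_{t+1,i}-\pi_{t,i}|\le 2/(t+m)$. Hence this term is at most $2/(\sqrt{p_{\min}}(t+m))$.

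For (b), the action $U_t$ is feasible, so $C_t-w_{J_tU_t}\ge0$. Lemma~\ref{lem:target-drift} with $s=s_t\ge2$ bounds this term by $2H_4\norm{\bar w}_2/(\sqrt{p_{\min}}(s_t-1))$.

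For (c), Lemma~\ref{lem:optimizer-set-sensitivity} bounds this term by $H_4\norm{\pi_{t+1}-\pi_t}_2/\sqrt{p_{\min}}\le 2H_4/(\sqrt{p_{\min}}(t+m))$.

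Altogether, $e_{t+1}\le \widehat e_t+\delta_t$ with the deterministic sequence
$$\delta_t:=\frac{A}{t}+\frac{B}{s_t-1},$$
where $A,B$ depend only on $H_4,\norm{\bar w}_2,p_{\min}$.

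\emph{Step 2: recursion in expectation.} Square the bound and use $(u+v)^2\le(1+\epsilon)u^2+(1+1/\epsilon)v^2$ with $\epsilon$ chosen so that $\beta:=(1+\epsilon)\alpha^2<1$, for instance $\epsilon=(1-\alpha^2)/(2\alpha^2)$. Taking expectations and applying \eqref{eq:conditional-coordinate-contraction} gives
$$\E\widehat e_{t+1}^2\le\alpha^2\E e_{t+1}^2\le \beta\,\E\widehat e_t^2+\alpha^2(1+1/\epsilon)\delta_t^2.$$
The initial term is bounded: every point of $\cX(\pi_1)$ has Euclidean norm at most one, so $\E\widehat e_1^2\le e_1^2\le 4/p_{\min}$.

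\emph{Step 3: summation.} Summing the linear recursion over $t=1,\dots,T-\bar s$ gives
$$\sum_{t=1}^{T-\bar s}\E\widehat e_t^2\le\frac{1}{1-\beta}\Bigl(\frac4{p_{\min}}+\alpha^2(1+\epsilon^{-1})\sum_{t\le T-\bar s-1}\delta_t^2\Bigr).$$
Moreover, $\sum_t\delta_t^2\le 2A^2\sum_{t\ge1}t^{-2}+2B^2\sum_{s\ge\bar s+1}(s-1)^{-2}$, which is finite and independent of $T$. This yields $\bar D$.

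\emph{Main obstacle.} The delicate point is Step 1(b). The target drift is only $O(1/s_t)$, which blows up near the end of the horizon. This is exactly why the sum stops at $T-\bar s$ and why the truncation at $\bar w_i$ in Lemma~\ref{lem:target-drift} is needed: it controls targets that are large relative to $s_t$. A second point to check is that the contraction is applied conditionally on $\mathcal F_{t-1}$ while the drift bound holds pathwise. This is consistent because $\delta_t$ is deterministic, so no conditioning on $\mathcal G_t$ is needed.
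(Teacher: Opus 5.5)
Your proposal is correct and follows essentially the same route as the paper. You derive the same pathwise bound $e_{t+1}\le\widehat e_t+\delta_t$ from the rescaling estimate, Lemma~\ref{lem:target-drift}, and Lemma~\ref{lem:optimizer-set-sensitivity}, and combine it with the conditional contraction \eqref{eq:conditional-coordinate-contraction}. The only differences are cosmetic: you perturb $c$ before $\pi$, and you close the recursion on $\E\widehat e_t^2$ with $(u+v)^2\le(1+\epsilon)u^2+(1+1/\epsilon)v^2$ instead of the paper's Minkowski iteration on $(\E e_t^2)^{1/2}$ followed by Cauchy--Schwarz, and both yield a $T$-independent bound.
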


\begin{proof}
By \eqref{eq:conditional-coordinate-contraction}, it is sufficient to show that $\sum_{t=1}^{T-\bar s}\E e_t^2$ is bounded by a constant independent of $T$. To that end, consider $t=1,\ldots,T-\bar s-1$. The estimate update satisfies
\begin{equation*}
    \pi_{t+1}-\pi_t
    =\frac{\mathbf e_{J_t}-\pi_t}{t+m}.
\end{equation*}
Since $\norm{\mathbf e_{J_t}}_2=1$ and $\norm{\pi_t}_2\le1$, it follows that
\begin{equation*}
    \norm{\pi_{t+1}-\pi_t}_2\le\frac{2}{t+m}.
\end{equation*}
Since $\widehat x_{t,j}=\pi_{t,j}\widehat q_{t,j}$, $x_{t+1,j}^0=\pi_{t+1,j}\widehat q_{t,j}$, and $\norm{\widehat q_{t,j}}_2\le1$, we also have
\begin{equation}\label{eq:warm-start-change}
\begin{aligned}
    \norm{x_{t+1}^0-\widehat x_t}_{p^{-1}}^2
    &\le
    \frac1{p_{\min}}
    \sum_{j=1}^m
    (\pi_{t+1,j}-\pi_{t,j})^2\\
    &\le\frac{4}{p_{\min}(t+m)^2}.
\end{aligned}
\end{equation}
The $p^{-1}$ norm of any vector is at most $1/\sqrt{p_{\min}}$ times its Euclidean norm. Lemmas~\ref{lem:optimizer-set-sensitivity} and \ref{lem:target-drift} imply that, for every $z\in\cS(c_t,\pi_t)$, there is a $z'\in\cS(c_{t+1},\pi_{t+1})$ such that
\begin{equation*}
    \norm{z-z'}_{p^{-1}}
    \le
    \frac{2H_4}{\sqrt{p_{\min}}(t+m)}
    +\frac{2H_4\norm{\bar w}_2}
    {\sqrt{p_{\min}}(s_t-1)}.
\end{equation*}
Indeed, the first term bounds the change from $\pi_t$ to $\pi_{t+1}$ at the fixed target $c_t$, and the second term bounds the subsequent change from $c_t$ to $c_{t+1}$. Choose $z$ nearest to $\widehat x_t$ in $\cS(c_t,\pi_t)$ and choose $z'$ as above. The triangle inequality and \eqref{eq:warm-start-change} then give
\begin{equation}\label{eq:tracking-recursion}
    e_{t+1}\le\widehat e_t+\delta_t,
\end{equation}
where
\begin{equation*}
    \delta_t
    :=
    \frac{2(1+H_4)}{\sqrt{p_{\min}}(t+m)}
    +\frac{2H_4\norm{\bar w}_2}
    {\sqrt{p_{\min}}(s_t-1)}.
\end{equation*}

Let $a_t=(\E e_t^2)^{1/2}$. Since $\delta_t$ is deterministic,
Minkowski's inequality and \eqref{eq:tracking-recursion} give
\begin{equation*}
    a_{t+1}
    \le (\E\widehat e_t^2)^{1/2}+\delta_t.
\end{equation*}
Taking expectations in \eqref{eq:conditional-coordinate-contraction} and using
the tower property gives $(\E\widehat e_t^2)^{1/2}\le\alpha a_t$. Hence,
\begin{equation*}
    a_{t+1}\le\alpha a_t+\delta_t.
\end{equation*}
Iterating this inequality yields
\begin{equation*}
    a_t
    \le
    \alpha^{t-1}a_1
    +\sum_{\ell=1}^{t-1}
      \alpha^{t-1-\ell}\delta_\ell.
\end{equation*}
Every vector in $\cX(\pi_1)$ has Euclidean norm at most one, so $a_1\le2/\sqrt{p_{\min}}$. By Cauchy--Schwarz,
\begin{equation*}
\left(
    \sum_{\ell=1}^{t-1}
      \alpha^{t-1-\ell}\delta_\ell
\right)^2
\le
\frac{1}{1-\alpha}
\sum_{\ell=1}^{t-1}
  \alpha^{t-1-\ell}\delta_\ell^2.
\end{equation*}
Applying $(u+v)^2\le2u^2+2v^2$ to the bound for $a_t$, summing over
$t$, and then exchanging the order of summation give
\begin{equation*}
    \sum_{t=1}^{T-\bar s}a_t^2
    \le
    \frac{2a_1^2}{1-\alpha^2}
    +\frac{2}{(1-\alpha)^2}
      \sum_{t=1}^{T-\bar s-1}\delta_t^2.
\end{equation*}
Moreover,
\begin{equation*}
    \delta_t^2
    \le
    \frac{8(1+H_4)^2}{p_{\min}(t+m)^2}
    +\frac{8H_4^2\norm{\bar w}_2^2}
    {p_{\min}(s_t-1)^2}.
\end{equation*}
For the first term, set $k=t+m$. For the second term, set
$k=s_t-1=T-t$; since $t\le T-\bar s-1$, we have $k\ge\bar s+1$.
It follows that
\begin{align*}
\sum_{t=1}^{T-\bar s}\E e_t^2
\le{}&
\frac{8}{p_{\min}(1-\alpha^2)}\\
&+\frac{16}{p_{\min}(1-\alpha)^2}
\left[
(1+H_4)^2\sum_{k=m+1}^{\infty}\frac1{k^2}
+H_4^2\norm{\bar w}_2^2
\sum_{k=\bar s+1}^{\infty}\frac1{k^2}
\right].
\end{align*}
The right-hand side is finite and independent of $T$. Taking expectations in \eqref{eq:conditional-coordinate-contraction} and summing over $t=1,\ldots,T-\bar s$ proves \eqref{eq:distance-sum}, with $\bar D$ equal to $\alpha^2$ times the preceding bound.
\end{proof}

\subsection{Proof of Theorem \ref{thm:regret}}\label{subsec:regret-proof}
It remains to combine the preceding loss and distance bounds with concentration bounds for the complementary events.
For the policy in Algorithm~\ref{alg:first-order-policy}, write $R_t=R_t^\mu$.

\begin{proof}[Proof of Theorem~\ref{thm:regret}]
\medskip\noindent\emph{Step 1: One-period losses.}
If $T\le {\bar s}$, Lemma~\ref{lem:loss-telescoping}, \eqref{eq:loss-normalization}, and Lemma~\ref{lem:normalized-loss-properties}(ii) imply
\[
    {\Reg_T\le TH_2(1+\norm r_2)\le \bar sH_2(1+\norm r_2)}.
\]
Hence, for the remainder of the proof, suppose $T>{\bar s}$.

{For every period $t=1,\ldots,T-\bar s$,}
Proposition~\ref{prop:loss-gap} implies
\begin{equation*}
R_t\le
H_2(1+\norm r_2)
\max\left\{
    \frac{64K_{\max}^2}{p_{\min}^2},
    \frac{4\norm W_2^2}{\zeta^2}
\right\}\widehat e_t^2
\end{equation*}
on the good event $\mathcal G_t$, where we used $\dist_2(\widehat x_t,\cS(c_t,\pi_t))\le\widehat e_t$. Lemma~\ref{lem:normalized-loss-properties}(ii) gives {$R_t\le H_2(1+\norm r_2)$} on $\mathcal G_t^c$.
Therefore,
\begin{equation*}
    \E R_t
    \le
    H_2(1+\norm r_2)
    \max\left\{
        \frac{64K_{\max}^2}{p_{\min}^2},
        \frac{4\norm W_2^2}{\zeta^2}
    \right\}\E\widehat e_t^2
    +H_2(1+\norm r_2)\Prob(\mathcal G_t^c).
\end{equation*}

\medskip\noindent\emph{Step 2: The failure probability.}
Theorem~D.2 of \citet{waggoner2015} implies that, if $\widehat p_n$ is the empirical distribution of $n$ i.i.d. samples from the probability vector $p$, then, for every $n\ge1$ and $\varepsilon>0$,
\begin{equation}\label{eq:empirical-l2-tail}
    \Prob\left(
        \norm{\widehat p_n-p}_2>\varepsilon
    \right)
    \le
    \mathrm \exp\left(1-\frac{n\varepsilon^2}{4}\right).
\end{equation}
This follows from Theorem~D.2 of \citet{waggoner2015} by taking
$\delta=\exp(-n\varepsilon^2/4)$. If $n\varepsilon^2\ge4$, the sample-size
condition in that theorem holds with equality; if $n\varepsilon^2<4$, the
right-hand side of \eqref{eq:empirical-l2-tail} exceeds one. Thus,
\eqref{eq:empirical-l2-tail} holds for every $n\ge1$ and $\varepsilon>0$.

{The vector $\nu_t$ is the empirical distribution of the $s_t$ i.i.d.}
arrivals $J_t,\ldots,J_T$.  Applying
\eqref{eq:empirical-l2-tail} with $n=s_t$ gives
\begin{equation}\label{eq:remaining-arrival-concentration}
    \Prob\left(
        \norm{\nu_t-p}_2>\frac\rho2
    \right)
    \le
    \mathrm \exp\left(1-\frac{\rho^2s_t}{16}\right).
\end{equation}
{To bound $\norm{\pi_t-p}_2$, suppose first that $t\ge2$ and set $n=t-1$.} Let $\widehat p_n$ be the
ordinary empirical distribution of $J_1,\ldots,J_n$, and let $u_m$ be the uniform distribution on $[m]$.  Then $\pi_t
    =\frac{n}{n+m}\widehat p_n
     +\frac{m}{n+m}u_m$.
Since $\norm{u_m-p}_2\le1$, we have $\norm{\pi_t-p}_2
    \le \norm{\widehat p_n-p}_2+\frac{m}{n+m}$.
If $\frac{m}{n+m}\le \frac{\rho}{4}$, i.e., $n\ge \frac{4m}{\rho}-m$, then
\begin{equation}\label{eq:pi-concentration}
\begin{aligned}
    \Prob\left(
        \norm{\pi_t-p}_2>\frac\rho2
    \right)
    \le
    \Prob\left(
        \norm{\widehat p_n-p}_2>\frac\rho4
    \right)
    \le
    \mathrm \exp\left(1-\frac{\rho^2n}{64}\right).
\end{aligned}
\end{equation}
A union bound using \eqref{eq:remaining-arrival-concentration} and
\eqref{eq:pi-concentration} yields
\begin{equation*}
\begin{aligned}
    \Prob(\mathcal G_t^c)
    \le
    \one_{\{t-1<\frac{4m}{\rho}-m\}}
    +\mathrm \exp\left(1-\frac{\rho^2(t-1)}{64}\right)
       \one_{\{t-1\ge \frac{4m}{\rho}-m\}}
    +\mathrm \exp\left(1-\frac{\rho^2s_t}{16}\right).
\end{aligned}
\end{equation*}
The case $t=1$ is included in the first term because $\rho<1$, and hence
$4m/\rho-m>0$. The first indicator is equal to one for at most $4m/\rho$
periods. The other two terms are bounded by geometric series. Since
$\exp(-\rho^2/16)\le\exp(-\rho^2/64)$, each series has sum at most
$e/[1-\exp(-\rho^2/64)]$. Therefore,
\begin{equation}\label{eq:failure-probability-sum}
    \sum_{t=1}^{T-{\bar s}}\Prob(\mathcal G_t^c)
    {\le
    \frac{4m}{\rho}
    +\frac{2e}{1-\exp(-\rho^2/64)}}.
\end{equation}

\medskip\noindent\emph{Step 3: Completing the proof.}
{The final ${\bar s}$ periods contribute at most {$\bar sH_2(1+\norm r_2)$}. Taking expectations in Lemma~\ref{lem:loss-telescoping} and using Lemma~\ref{lem:period-gap-bound} and \eqref{eq:failure-probability-sum}, we obtain
\begin{align*}
{\Reg_T(\mu)}
&\le
H_2(1+\norm r_2)
\max\left\{
    \frac{64K_{\max}^2}{p_{\min}^2},
    \frac{4\norm W_2^2}{\zeta^2}
\right\}
\sum_{t=1}^{T-{\bar s}}\E\widehat e_t^2\\
&\quad
+H_2(1+\norm r_2)\sum_{t=1}^{T-{\bar s}}\Prob(\mathcal G_t^c)
+{\bar sH_2(1+\norm r_2)}\\
&\le
H_2(1+\norm r_2)
\max\left\{
    \frac{64K_{\max}^2}{p_{\min}^2},
    \frac{4\norm W_2^2}{\zeta^2}
\right\}\bar D\\
&\quad
+H_2(1+\norm r_2)\left(
    \frac{4m}{\rho}
    +\frac{2e}{1-\exp(-\rho^2/64)}
    +\bar s
\right)
=:C.
\end{align*}
The constant {$C$} is independent of $T$, which proves the result.}
\end{proof}

\section{Concluding Remarks}\label{sec:concluding-remarks}

We have presented a primal first-order learning policy for finite-action
online resource allocation. Under i.i.d. arrivals, the policy with a fixed
penalty parameter achieves $O(1)$ expected additive regret relative to the
hindsight optimum. The adaptive version chooses the penalty parameter from
the observed arrivals and attains the same guarantee. Neither version solves
a linear program, and neither result requires a nondegeneracy assumption on
the fluid linear program.

After observing a type $j$ request, the policy performs one gradient ascent
update and a Euclidean projection involving only the $K_j$ coordinates
associated with that type. The quantities required for the update can be
maintained without computing the other coordinates. Thus, when each request
type has relatively few actions, the computational effort in each period can
be much smaller than that of a full gradient update.

Related work studies allocation and pricing
\citep{vera_banerjee_gurvich_2021,vera_banerjee_gurvich_erratum_2026},
price-based revenue management \citep{wang_wang_2022}, multi-way dynamic
matching \citep{wei_xu_yu_2026}, and online demand fulfillment with initial
inventory placement \citep{arlotto_keskin_wei_2026}. Other extensions include
nonstationary arrivals \citep{ma_rusmevichientong_sumida_topaloglu_2020},
delayed decisions \citep{xie_ma_xin_2025}, and uncertainty about the horizon
\citep{balseiro_kroer_kumar_2023,banerjee_freund_2025}. It remains open whether
first-order learning policies can attain constant expected regret in these
settings without solving a linear program or imposing nondegeneracy.

Our model assumes that rewards and resource-consumption vectors have finite
support. For online linear programs with continuous-support inputs under local
regularity and nondegeneracy conditions, \citet{li_ye_2022} obtain
$O(\log T\log\log T)$ regret, and \citet{bray_2025} sharpens this to the tight
rate $\Theta(\log T)$ using LP-based re-solving policies.
\citet{ma_cao_tsang_xia_2025} also obtain $O(\log T)$ regret through adaptive
re-solving; their faster first-order variant achieves $O(\log^2 T)$ regret
under related growth, smoothness, and nondegeneracy conditions. Without
nondegeneracy, \citet{besbes_kanoria_kumar_2025} analyze the multisecretary
problem under a broad class of reward distributions, while
\citet{jiang_ma_zhang_2025} establish an $O(\log^2 T)$ bound for network
revenue management with continuously distributed rewards whose densities are
bounded below and finitely many deterministic resource-consumption vectors.
\citet{zhang_continuous_2026} further allows both rewards and scalar resource
consumptions to be continuously distributed. The $O(\log^2 T)$ policy of
\citet{jiang_ma_zhang_2025} re-solves a semi-fluid relaxation in every period.
The policy of \citet{zhang_continuous_2026} estimates the marginal value of the
expected fractional hindsight problem by simulating future arrivals. It
remains open whether comparable regret bounds in these models can be obtained
using only first-order updates, without solving a linear program. Preliminary
results from our ongoing work indicate that first-order learning policies can
attain the $O(\log^2 T)$ regret bound of \citet{jiang_ma_zhang_2025} without
solving a linear program.


\clearpage
\appendix

\section{Proofs for Section~\ref{sec:preliminary-analysis}}
\subsection{Proof of Lemma~\ref{lem:penalty-gap-decomposition}}
\label{app:penalty-gap-decomposition}

\begin{proof}
{
For part (i), consider the problem
\begin{equation*}
\begin{aligned}
\max_{y,\xi}\quad
    &r^Ty-\frac{1}{2\kappa}\norm{\xi}_2^2\\
\text{subject to}\quad
    &Wy\le c+\xi,\\
    &y\in\cX(\eta),\qquad \xi\ge0.
\end{aligned}
\end{equation*}
For a fixed allocation $y$, the smallest feasible expansion is $\xi=[Wy-c]_+$, so the objective becomes $f_c(y)$. Therefore, maximizing over $y\in\cX(\eta)$ gives $f_{c,\eta}^*$. For a fixed expansion $\xi$, maximizing over $y$ gives $G_{c,\eta}(\xi)$. Maximizing instead over $\xi\ge0$ proves
\[
    f_{c,\eta}^*
    =\max_{\xi\ge0}G_{c,\eta}(\xi)
    =G_{c,\eta}(\xi^\star).
\]

For part (ii), suppose first that $y\in\cS(c,\eta)$. Then $(y,\xi_y)$ is an optimal solution of the problem above, so $\xi_y$ maximizes $G_{c,\eta}$. Since this maximizer is unique, $\xi_y=\xi^\star$. With this expansion fixed, $y$ must maximize the fluid LP with capacity $c+\xi^\star$. Conversely, suppose that $y$ maximizes this fluid LP. Then $\xi_y\le\xi^\star$ coordinatewise, and hence
\[
    f_c(y)
    \ge V_r(c+\xi^\star,\eta)
       -\frac{1}{2\kappa}\norm{\xi^\star}_2^2
    =f_{c,\eta}^*.
\]
Thus, $y\in\cS(c,\eta)$, which proves part (ii).

For part (iii), the definition of $\xi_x$ gives
\[
    f_c(x)
    =r^Tx-\frac{1}{2\kappa}\norm{\xi_x}_2^2.
\]
Combining this identity with part (i), we obtain
\begin{align*}
\Gamma
&=G_{c,\eta}(\xi^\star)-f_c(x)\\
&=\frac{1}{2\kappa}\norm{\xi_x}_2^2-r^Tx+G_{c,\eta}(\xi^\star)\\
&=\left[G_{c,\eta}(\xi^\star)-G_{c,\eta}(\xi_x)\right]
  +\left[V_r(c+\xi_x,\eta)-r^Tx\right].
\end{align*}

For part (iv), the $1/\kappa$-strong concavity of $G_{c,\eta}$ gives
\[
    G_{c,\eta}(\xi^\star)-G_{c,\eta}(\xi_x)
    \ge\frac{1}{2\kappa}\norm{\xi_x-\xi^\star}_2^2.
\]

It remains to prove part (v).  By the optimality of $\xi^\star$,
\[
    G_{c,\eta}(\xi^\star)\ge G_{c,\eta}(0)=V_r(c,\eta)\ge0.
\]
Since $\eta$ is a probability vector, every allocation in $\cX(\eta)$ has total amount one, and hence
\[
    V_r(c+\xi^\star,\eta)\le r_{\max}.
\]
Consequently,
\[
    0
    \le G_{c,\eta}(\xi^\star)
    \le r_{\max}
       -\frac{1}{2\kappa}\norm{\xi^\star}_2^2,
\]
which proves
\[
    \norm{\xi^\star}_2\le\sqrt{2\kappa r_{\max}}.
\]
Since $x\in\cP(c+\xi_x,\eta)$, the second term in the decomposition in part
(iii) is nonnegative. Therefore, parts (iii) and (iv) give
\[
    \norm{\xi_x-\xi^\star}_2
    \le\sqrt{2\kappa\Gamma}.
\]
The triangle inequality now yields
\[
    \norm{\xi_x}_2
    \le\sqrt{2\kappa\Gamma}
      +\sqrt{2\kappa r_{\max}},
\]
as required.
}
\end{proof}

\subsection{Proof of Lemma~\ref{lem:optimizer-set-sensitivity}}
\label{app:optimizer-set-sensitivity}

{To prove Lemma~\ref{lem:optimizer-set-sensitivity}, we first use a dual formulation to bound the change in the optimal capacity expansion. We then apply Lemma~\ref{lem:optimal-face-sensitivity} to the fluid LPs with the corresponding expanded capacities. For $x\in\cX(\eta)$ and $\lambda\ge0$, define}
\begin{equation*}
    \mathcal L_{c,\eta}^{\kappa}(x,\lambda)
    :=r^Tx+\lambda^T(c-Wx)
      +\frac{\kappa}{2}\norm{\lambda}_2^2.
\end{equation*}
{For fixed $x$, minimizing over $\lambda\ge0$ gives}
\begin{equation}\label{eq:price-elimination}
    {\min_{\lambda\ge0}
    \mathcal L_{c,\eta}^{\kappa}(x,\lambda)
    =r^Tx-\frac{1}{2\kappa}\norm{\xi_x}_2^2
    =f_c(x),}
\end{equation}
{and the unique minimizing price is $\xi_x/\kappa$. For $j\in[m]$, let}
\begin{equation*}
    {h_j(\lambda)
    :=\max_{a\in\mathcal A_j}\{r_{ja}-w_{ja}^T\lambda\}.}
\end{equation*}
{Thus, $h_j(\lambda)$ is the largest reward of a type $j$ action after charging the resource price $\lambda$. Maximizing $\mathcal L_{c,\eta}^{\kappa}(x,\lambda)$ over $x\in\cX(\eta)$ gives}
\begin{equation*}
    {Q_{c,\eta}(\lambda)
    :=c^T\lambda+
      \sum_{j=1}^m\eta_jh_j(\lambda)
      +\frac\kappa2\norm{\lambda}_2^2.}
\end{equation*}
{The function $Q_{c,\eta}$ is strongly convex. Denote its unique minimizer over $\lambda\ge0$ by $\lambda^\star(c,\eta)$.}

\begin{lemma}
\label{lem:penalized-representations}
{Let $c\in\R_+^d$ and let $\eta$ be a probability vector. For any optimal solution $x^{*}\in\cS(c,\eta)$,}
\begin{equation*}
    [Wx^{*}-c]_+
    =\xi^\star(c,\eta)
    =\kappa\lambda^\star(c,\eta).
\end{equation*}
\end{lemma}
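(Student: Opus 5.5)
The first equality, $[Wx^*-c]_+=\xi^\star(c,\eta)$, is exactly Lemma~\ref{lem:penalty-gap-decomposition}(ii), since $\xi_{x^*}=[Wx^*-c]_+$. So the content to prove is $\xi^\star(c,\eta)=\kappa\lambda^\star(c,\eta)$. I will use a saddle-point argument for $\mathcal L^\kappa_{c,\eta}$ on the compact convex set $\cX(\eta)$ and the closed convex cone $\lambda\ge0$. This Lagrangian is linear (hence concave) in $x$ and strongly convex in $\lambda$.

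\textbf{Step 1: minimax equality.} By \eqref{eq:price-elimination}, $\max_{x\in\cX(\eta)}\min_{\lambda\ge0}\mathcal L^\kappa_{c,\eta}=f^*_{c,\eta}$. By the definition of $Q_{c,\eta}$, $\min_{\lambda\ge0}\max_{x}\mathcal L^\kappa_{c,\eta}=\min_{\lambda\ge0}Q_{c,\eta}(\lambda)$. To verify that maximizing over $x$ yields $\sum_j\eta_jh_j(\lambda)$, note that each block is a scaled simplex of mass $\eta_j$, so the maximum puts all of that mass on an action attaining $h_j(\lambda)$. Since $\cX(\eta)$ is compact and convex and the function is concave--convex and continuous, Sion's minimax theorem gives equality of the two values. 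Hence a saddle point exists: any $x^*\in\cS(c,\eta)$, paired with $\lambda^\star(c,\eta)$, is one. This uses the standard fact that with equal values, the product of the outer optimizer sets equals the saddle-point set, and both optimizer sets are attained here. The $\lambda$-side minimum is attained by strong convexity and coercivity of $Q_{c,\eta}$.

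\textbf{Step 2: identify the price.} At the saddle point $(x^*,\lambda^\star)$, the vector $\lambda^\star$ minimizes $\mathcal L^\kappa_{c,\eta}(x^*,\cdot)$ over $\lambda\ge0$. The text after \eqref{eq:price-elimination} states that this minimizer is unique and equals $\xi_{x^*}/\kappa$. I will check this directly: coordinatewise, minimizing $\lambda_i(c-Wx^*)_i+\frac\kappa2\lambda_i^2$ over $\lambda_i\ge0$ gives $\lambda_i=[(Wx^*-c)_i]_+/\kappa$. Therefore $\kappa\lambda^\star=\xi_{x^*}=\xi^\star$, where the last equality is Lemma~\ref{lem:penalty-gap-decomposition}(ii).

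\textbf{Main obstacle.} The delicate point is justifying that every $x^*\in\cS(c,\eta)$ forms a saddle pair with $\lambda^\star$, not merely some $x^*$. I will argue this through the chain
\[
Q_{c,\eta}(\lambda^\star)\ge\mathcal L^\kappa_{c,\eta}(x^*,\lambda^\star)\ge\min_{\lambda\ge0}\mathcal L^\kappa_{c,\eta}(x^*,\lambda)=f^*_{c,\eta}.
\]
By the minimax equality, the two ends are equal, so all the inequalities are tight. Tightness of the second inequality shows that $\lambda^\star$ minimizes $\mathcal L^\kappa_{c,\eta}(x^*,\cdot)$, which is all that Step~2 needs.
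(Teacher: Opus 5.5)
Your proof is correct, but it reaches $\xi^\star=\kappa\lambda^\star$ by a shorter route than the paper. Both arguments start with Sion's theorem applied to $\mathcal L^\kappa_{c,\eta}$ to obtain $f^*_{c,\eta}=\min_{\lambda\ge0}Q_{c,\eta}(\lambda)$.

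The paper then introduces the auxiliary function $\Psi_{c,\eta}(\xi,\lambda)=(c+\xi)^T\lambda+\sum_j\eta_jh_j(\lambda)-\norm{\xi}_2^2/(2\kappa)$. It uses the conjugate identity for $\frac\kappa2\norm{\lambda}_2^2$ to get $\max_{\xi\ge0}\Psi_{c,\eta}=Q_{c,\eta}$, and LP duality to get $\min_{\lambda\ge0}\Psi_{c,\eta}=G_{c,\eta}$. From these it concludes that $(\xi^\star,\lambda^\star)$ is a saddle point of $\Psi_{c,\eta}$, and reads off $\xi^\star=\kappa\lambda^\star$ because $\kappa\lambda$ is the unique maximizer of $\Psi_{c,\eta}(\cdot,\lambda)$. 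It then rederives $\xi_{x^*}=\xi^\star$ from the gap decomposition in Lemma~\ref{lem:penalty-gap-decomposition}(iii)--(iv).

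You stay with $\mathcal L^\kappa_{c,\eta}$ throughout. Tightness of $Q_{c,\eta}(\lambda^\star)\ge\mathcal L^\kappa_{c,\eta}(x^*,\lambda^\star)\ge f_c(x^*)=f^*_{c,\eta}$ forces $\lambda^\star$ to minimize the strictly convex function $\mathcal L^\kappa_{c,\eta}(x^*,\cdot)$. Its unique minimizer is $\xi_{x^*}/\kappa$ by your coordinatewise computation, and Lemma~\ref{lem:penalty-gap-decomposition}(ii) finishes the argument. This avoids both LP duality and the expansion-level saddle function.

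What the paper's detour buys is an identification of $\kappa\lambda^\star$ with $\xi^\star$ directly as a saddle relation between $G_{c,\eta}$ and $Q_{c,\eta}$, without reference to any particular optimal allocation. In your argument the identity passes through some $x^*\in\cS(c,\eta)$. That is harmless, since $\cS(c,\eta)$ is nonempty by compactness of $\cX(\eta)$ and continuity of $f_c$.
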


\begin{proof}
For brevity, write
\[
    \xi^\star:=\xi^\star(c,\eta),
    \qquad
    \lambda^\star:=\lambda^\star(c,\eta).
\]

Equation~\eqref{eq:price-elimination} gives
\[
    {f_{c,\eta}^*}
    =
    \max_{x\in\cX(\eta)}
    \min_{\lambda\ge0}
    \mathcal L_{c,\eta}^{\kappa}(x,\lambda).
\]
For every fixed $\lambda$, the function $\mathcal L_{c,\eta}^{\kappa}(x,\lambda)$ is continuous and affine on the compact and convex set $\cX(\eta)$. For every
fixed $x$, the function $\mathcal L_{c,\eta}^{\kappa}(x,\lambda)$ is continuous and convex on $\mathbb R_+^d$. Therefore, Sion's minimax theorem (\cite{sion1958general}) yields
\begin{equation*}
\begin{aligned}
    {f_{c,\eta}^*=}
    \min_{\lambda\ge0}
    \max_{x\in\cX(\eta)}
    \mathcal L_{c,\eta}^{\kappa}(x,\lambda)
    =
    \min_{\lambda\ge0}
    Q_{c,\eta}(\lambda).
\end{aligned}
\end{equation*}
Together with Lemma~\ref{lem:penalty-gap-decomposition}(i), this proves
\[
    f_{c,\eta}^*
    =\max_{\xi\ge0}G_{c,\eta}(\xi)
    =\min_{\lambda\ge0}Q_{c,\eta}(\lambda).
\]

Define
\[
    \Psi_{c,\eta}(\xi,\lambda)
    :=(c+\xi)^T\lambda
      +\sum_{j=1}^m\eta_jh_j(\lambda)
      -\frac{1}{2\kappa}\norm{\xi}_2^2.
\]
The identity
\[
    \frac\kappa2\norm{\lambda}_2^2
    =\max_{\xi\ge0}
    \left\{
        \xi^T\lambda
        -\frac{1}{2\kappa}\norm{\xi}_2^2
    \right\}
\]
implies that $\max_{\xi\ge0}\Psi_{c,\eta}(\xi,\lambda)=Q_{c,\eta}(\lambda)$. For a fixed $\xi\ge0$, LP duality gives
\begin{equation*}
    \min_{\lambda\ge0}\Psi_{c,\eta}(\xi,\lambda)
    =V_r(c+\xi,\eta)
      -\frac{1}{2\kappa}\norm{\xi}_2^2
    =G_{c,\eta}(\xi).
\end{equation*}
Consequently,
\[
    \max_{\xi\ge0}\min_{\lambda\ge0}
    \Psi_{c,\eta}(\xi,\lambda)
    =
    \min_{\lambda\ge0}\max_{\xi\ge0}
    \Psi_{c,\eta}(\xi,\lambda)
    =f_{c,\eta}^*.
\]
The expansion $\xi^\star$ attains the maximum on the left, and $\lambda^\star$ attains the minimum on the right. Thus, $(\xi^\star,\lambda^\star)$ is a saddle point of $\Psi_{c,\eta}$. For fixed $\lambda$, the unique maximizing expansion is $\kappa\lambda$, so $\xi^\star=\kappa\lambda^\star$.

For any $x^*\in\cS(c,\eta)$, the optimization gap is zero. By
Lemma~\ref{lem:penalty-gap-decomposition}(iii),
\begin{equation*}
\begin{aligned}
0
={}&G_{c,\eta}(\xi^\star)-G_{c,\eta}(\xi_{x^*})\\
&+V_r(c+\xi_{x^*},\eta)-r^Tx^*.
\end{aligned}
\end{equation*}
The first term on the right is nonnegative by
Lemma~\ref{lem:penalty-gap-decomposition}(iv). The second is nonnegative
because $x^*\in\cP(c+\xi_{x^*},\eta)$. Hence, both terms are zero, and
part~(iv) implies that $\xi_{x^*}=\xi^\star$. Since
$\xi_{x^*}=[Wx^*-c]_+$, this proves the lemma.
\end{proof}

\begin{proof}[Proof of Lemma~\ref{lem:optimizer-set-sensitivity}]
We first bound the change in $\lambda^\star$. For $c,c'\in\R_+^d$ and probability vectors $\eta,\eta'\in\R_+^m$, denote $\lambda=\lambda^\star(c,\eta)$ and
$\lambda'=\lambda^\star(c',\eta')$.
If $\lambda=\lambda'$, the bound on $\norm{\lambda-\lambda'}_2$ is immediate.
Suppose now that $\lambda\neq\lambda'$.
The optimality conditions imply
\begin{align*}
    (c+\kappa\lambda+s)^T(\lambda'-\lambda)&\geq 0,\\
    (c'+\kappa\lambda'+s')^T(\lambda-\lambda')&\geq 0,
\end{align*}
{where $s$ is a subgradient of $\sum_j\eta_jh_j$ at $\lambda$, and $s'$ is a subgradient of $\sum_j\eta_j'h_j$ at $\lambda'$.}
Choose $v_j'\in\partial h_j(\lambda')$ so that
$s'=\sum_j\eta_j'v_j'$. Define
\[
    {\widetilde s}=\sum_j\eta_jv_j',
    \qquad
    v=\sum_j(\eta_j'-\eta_j)v_j'.
\]
{The vector $\widetilde s$ is a subgradient of $\sum_j\eta_jh_j$ at $\lambda'$.}
Therefore, the vectors $s$ and $\widetilde s$ are subgradients of the convex function $\sum_j\eta_jh_j$ at $\lambda$ and $\lambda'$, respectively. Monotonicity of its subdifferential gives $(s-\widetilde s)^T(\lambda-\lambda')\ge0$. Adding the two optimality inequalities and using this relation gives
\[
\begin{aligned}
    \kappa\norm{\lambda-\lambda'}_2^2 & \leq (c-c'+s-s')^T(\lambda'-\lambda)\\
    & = (c-c'+s-{\widetilde s}-v)^T(\lambda'-\lambda)\\
    & \leq (c-c'-v)^T(\lambda'-\lambda)\\
    & \leq \left(\norm{c-c'}_2+\norm{v}_2\right)\norm{\lambda'-\lambda}_2.
\end{aligned}
\]
Every subgradient of $h_j$ is a convex combination of vectors $-w_{ja}$, so
$\norm{v_j'}_2\le w_{\max}$ and $\norm{v}_2\le w_{\max}\norm{\eta-\eta'}_1$.
Dividing both sides by $\norm{\lambda-\lambda'}_2$ gives
\begin{equation*}
\norm{\lambda^\star(c,\eta)-\lambda^\star(c',\eta')}_2
\le\frac1\kappa
\left(
    \norm{c-c'}_2+w_{\max}\norm{\eta-\eta'}_1
\right).
\end{equation*}

By Lemma~\ref{lem:penalized-representations}, $\xi^\star(c,\eta)=\kappa\lambda^\star(c,\eta)$. Therefore, the preceding bound gives
\begin{align*}
\norm{
    \left[c+\xi^\star(c,\eta)\right]
    -\left[c'+\xi^\star(c',\eta')\right]
}_2
&\le
\norm{c-c'}_2
+\kappa\norm{\lambda^\star(c,\eta)-\lambda^\star(c',\eta')}_2\\
&\le
2\norm{c-c'}_2
+w_{\max}\sqrt m\,\norm{\eta-\eta'}_2.
\end{align*}
By \eqref{eq:lifted-capacity}, $\cS(c,\eta)$ is the optimal solution set of the fluid LP with capacity $c+\xi^\star(c,\eta)$ and demand $\eta$, and analogously for $(c',\eta')$. Therefore, Lemma~\ref{lem:optimal-face-sensitivity} implies
\begin{align*}
d_H\bigl(
    \cS(c,\eta),\cS(c',\eta')
\bigr)
&\le
{H_1}
\sqrt{
    \norm{
        \left[c+\xi^\star(c,\eta)\right]
        -\left[c'+\xi^\star(c',\eta')\right]
    }_2^2+
    \norm{\eta-\eta'}_2^2
}\\
&\le
{H_4}\left(
    \norm{c-c'}_2+
    \norm{\eta-\eta'}_2
\right),
\end{align*}
which proves the result.
\end{proof}

\section{Proofs for Section~\ref{sec:regret-analysis}}
\subsection{Proof of Lemma~\ref{lem:normalized-loss-properties}}
\label{app:normalized-loss-properties}

\begin{proof}
Part (i) follows directly from the second expression in
\eqref{eq:normalized-action-loss}.  To prove part (ii), let $y^0$ maximize
$r^Ty$ over $\cP(c,\eta)$.  By Lemma~\ref{lem:optimal-face-sensitivity}, there is
a solution $y^1$ that maximizes $r^Ty$ over
$\cP(c-hw_{ja},\eta-h\mathbf e_j)$ and satisfies
\[
    \norm{y^0-y^1}_2
    \le H_1\sqrt{\norm{hw_{ja}}_2^2+h^2}
    \le hH_1\sqrt{w_{\max}^2+1}.
\]
Let $\mathbf e_{ja}$ denote the coordinate vector indexed by $(j,a)$.  It follows from
the first expression in \eqref{eq:normalized-action-loss} that
\begin{align*}
\cR_{ja}(r;c,\eta,h)
&=\frac1h r^T(y^0-y^1-h\mathbf e_{ja})\\
&\le \norm r_2\left(H_1\sqrt{w_{\max}^2+1}+1\right)
=H_2\norm r_2.
\end{align*}
The second expression in \eqref{eq:normalized-action-loss} shows that the normalized
loss is nonnegative.  This proves part (ii).
\end{proof}

{
\section{Unknown minimum arrival probability}\label{app:unknown-pmin}

For the policy in Theorem~\ref{thm:adaptive-kappa}, we choose the penalty parameter from the smallest estimated arrival probability in each period, without using $p_{\min}$.  Let $K_{\max}=\max_jK_j$ and $w_{\min}=\min\{w_{ja,i}:w_{ja,i}>0\}$.  For $u\ge0$, define
\begin{equation*}
\kappa(u)=\min\left\{
1,
\frac{u^2}{4[64K_{\max}(H_1+1)]^2(1+\norm r_2+r_{\max})},
\frac{u^2w_{\min}^2}{128K_{\max}^2r_{\max}}
\right\}.
\end{equation*}
In period $t$, set
\begin{equation*}
\underline p_t=\frac12\min_{j\in[m]}\pi_{t,j},
\qquad
\kappa_t=\kappa(\underline p_t),
\qquad
L_t=\frac{\norm W_2^2}{\kappa_t}.
\end{equation*}
The gradient and its update use $\kappa_t$ and $L_t$, which depend only on past arrivals.

\par\medskip
\begin{proof}[Proof of Theorem~\ref{thm:adaptive-kappa}]
The value of $p_{\min}$ is used only in the proof. We adapt the proof of Lemma~\ref{lem:period-gap-bound} to account for changes in the penalty parameter. We first show that accurate arrival estimates keep the penalty parameter in a fixed positive interval. We then bound the change in the optimal solution set caused by varying the penalty parameter and combine this bound with the contraction and rescaling estimates to bound the sum of the expected squared distances. When the estimates are inaccurate, we use the bounded diameter of the feasible set and concentration inequalities. Finally, we apply Proposition~\ref{prop:loss-gap} and the uniform loss bound to obtain the regret bound.

\medskip\noindent\emph{Step 1: Accurate probability estimates.}
Define
\begin{equation*}
    \mathcal A_t
    :=
    \left\{
        \max_{j\in[m]}|\pi_{t,j}-p_j|
        \le\frac{p_{\min}}2
    \right\}.
\end{equation*}
For $t\ge2$, set $n=t-1$ and let $\widehat p_n$ be the empirical distribution
of $J_1,\ldots,J_n$. As in the proof of Theorem~\ref{thm:regret},
\[
    \norm{\pi_t-\widehat p_n}_2
    \le\frac{m}{n+m}.
\]
If
\[
    n\ge\frac{4m}{p_{\min}}-m,
\]
then this upper bound is at most $p_{\min}/4$. On $\mathcal A_t^c$, we have
$\norm{\pi_t-p}_2>p_{\min}/2$, so the triangle inequality gives
\[
    \norm{\widehat p_n-p}_2>\frac{p_{\min}}4.
\]
Applying \eqref{eq:empirical-l2-tail} gives
\begin{equation}\label{eq:adaptive-estimate-tail}
    \Prob(\mathcal A_t^c)
    \le
    \exp\left(
        1-\frac{(t-1)p_{\min}^2}{64}
    \right).
\end{equation}
The number of periods before this bound applies is independent of $T$.
Together with the geometric decay in \eqref{eq:adaptive-estimate-tail}, this
shows that $\sum_{t\ge1}\Prob(\mathcal A_t^c)$ is finite.

On $\mathcal A_t$, every coordinate of $\pi_t$ is at least $p_{\min}/2$.
If $j_0$ satisfies $p_{j_0}=p_{\min}$, then
$\min_j\pi_{t,j}\le\pi_{t,j_0}\le3p_{\min}/2$.  Hence
\begin{equation}\label{eq:adaptive-kappa-bounds}
    \frac{p_{\min}}4
    \le\underline p_t
    \le\frac{3p_{\min}}4,
    \qquad
    \kappa(p_{\min}/4)
    \le\kappa_t
    \le\kappa(p_{\min}).
\end{equation}
Moreover, the definitions of $\rho$ and $\zeta$ give
\begin{equation*}
    \kappa(p_{\min})
    =
    \min\left\{
        1,\,
        \frac{\rho^2}{4(r_{\max}+1+\norm r_2)},\,
        \frac{\zeta^2}{8r_{\max}}
    \right\}.
\end{equation*}

\medskip\noindent\emph{Step 2: Sensitivity to the penalty parameter.}
We first control the change in the optimal solution set caused by a change in
the penalty parameter.  Write $\cS_\kappa(c,\eta)$,
$\lambda_\kappa^\star(c,\eta)$, and $\xi_\kappa^\star(c,\eta)$ when the
dependence on $\kappa$ matters.  Fix $c$ and $\eta$, and suppose that
\[
    \kappa(p_{\min}/4)
    \le\kappa\le\kappa'
    \le\kappa(p_{\min}).
\]
Choose
\[
    s\in\partial\left(\sum_{j=1}^m\eta_jh_j\right)
        (\lambda_\kappa^\star),
    \qquad
    s'\in\partial\left(\sum_{j=1}^m\eta_jh_j\right)
        (\lambda_{\kappa'}^\star).
\]
The optimality conditions for the two regularized dual problems give
\begin{align*}
\left(c+s+\kappa\lambda_\kappa^\star\right)^T
    \left(\lambda_{\kappa'}^\star-\lambda_\kappa^\star\right)
    &\ge0,\\
\left(c+s'+\kappa'\lambda_{\kappa'}^\star\right)^T
    \left(\lambda_\kappa^\star-\lambda_{\kappa'}^\star\right)
    &\ge0.
\end{align*}
The subdifferential of $\sum_j\eta_jh_j$ is monotone. Hence,
\[
    (s-s')^T
    (\lambda_\kappa^\star-\lambda_{\kappa'}^\star)
    \ge0.
\]
Adding the two optimality inequalities and using this relation gives
\[
    \kappa
    \norm{\lambda_\kappa^\star-\lambda_{\kappa'}^\star}_2^2
    \le
    (\kappa'-\kappa)
    \norm{\lambda_{\kappa'}^\star}_2
    \norm{\lambda_\kappa^\star-\lambda_{\kappa'}^\star}_2.
\]
Lemmas~\ref{lem:penalized-representations} and
\ref{lem:penalty-gap-decomposition}(v) imply
\[
    \norm{\lambda_{\kappa'}^\star}_2
    =
    \frac{\norm{\xi_{\kappa'}^\star}_2}{\kappa'}
    \le
    \sqrt{\frac{2r_{\max}}{\kappa'}}
    \le
    \sqrt{\frac{2r_{\max}}{\kappa(p_{\min}/4)}}.
\]
Moreover,
\[
    \kappa\lambda_\kappa^\star
    -\kappa'\lambda_{\kappa'}^\star
    =
    \kappa\left(
        \lambda_\kappa^\star-\lambda_{\kappa'}^\star
    \right)
    +(\kappa-\kappa')\lambda_{\kappa'}^\star.
\]
Therefore, the preceding bounds imply
\[
\begin{aligned}
    \norm{
        \kappa\lambda_\kappa^\star
        -\kappa'\lambda_{\kappa'}^\star
    }_2
    &\le
    2\sqrt{
        \frac{2r_{\max}}{\kappa(p_{\min}/4)}
    }
    |\kappa-\kappa'|.
\end{aligned}
\]
The lifted-capacity representation \eqref{eq:lifted-capacity} and
Lemma~\ref{lem:optimal-face-sensitivity} now give
\begin{equation}\label{eq:kappa-set-sensitivity}
    d_H\bigl(
        \cS_\kappa(c,\eta),
        \cS_{\kappa'}(c,\eta)
    \bigr)
    \le
    2H_1\sqrt{
        \frac{2r_{\max}}{\kappa(p_{\min}/4)}
    }
    |\kappa-\kappa'|.
\end{equation}
The same bound holds when $\kappa'\le\kappa$.

Let $H_5$ be a Lipschitz constant for $\kappa(u)$ on $[0,1]$.  Since
\[
    \pi_{t+1}-\pi_t
    =\frac{\mathbf e_{J_t}-\pi_t}{t+m},
\]
we have
\begin{equation}\label{eq:adaptive-kappa-change}
    |\underline p_{t+1}-\underline p_t|
    \le\frac{1}{2(t+m)},
    \qquad
    |\kappa_{t+1}-\kappa_t|
    \le\frac{H_5}{2(t+m)}.
\end{equation}

\medskip\noindent\emph{Step 3: Bounding the distance.}
We next repeat the distance argument with the time-varying penalty parameter.
For the adaptive policy, define
\begin{equation*}
\begin{aligned}
    e_t
    &:=
    \dist_{p^{-1}}\bigl(
        x_t^0,\cS_{\kappa_t}(c_t,\pi_t)
    \bigr),\\
    \widehat e_t
    &:=
    \dist_{p^{-1}}\bigl(
        \widehat x_t,\cS_{\kappa_t}(c_t,\pi_t)
    \bigr).
\end{aligned}
\end{equation*}
Let
\begin{equation*}
    \overline L
    :=
    \frac{\norm W_2^2}{\kappa(p_{\min}/4)},
    \qquad
    \underline\sigma
    :=
    \min\left\{
        \frac1{2H_3},
        \frac1{2\kappa(p_{\min})H_1^2}
    \right\},
\end{equation*}
and set
\begin{equation*}
    \overline\alpha
    :=
    \sqrt{
        1-\frac{p_{\min}}2
        \left(
            1-\sqrt{
                \frac{\overline L}
                {\overline L+\underline\sigma}
            }
        \right)^2
    }
    <1.
\end{equation*}
On $\mathcal A_t$, \eqref{eq:adaptive-kappa-bounds} implies that
$L_t\le\overline L$. The value of $\sigma$ in
Lemma~\ref{lem:quadratic-growth}, with $\kappa_t$ in place of $\kappa$,
is at least $\underline\sigma$. The quantity
$\sqrt{L/(L+\sigma)}$ is increasing in $L$ and decreasing in $\sigma$,
and the contraction factor in Lemma~\ref{lem:coordinate-contraction} is
increasing in this quantity. Hence, the contraction factor for period $t$
is at most $\overline\alpha$.

Since $\pi_t$ is determined by the arrivals before period $t$, the event
$\mathcal A_t$ and the parameters $\kappa_t$ and $L_t$ are
$\mathcal F_{t-1}$-measurable. Applying
Lemma~\ref{lem:coordinate-contraction} conditionally on
$\mathcal F_{t-1}$ gives
\[
    \E\left[
        \widehat e_t^2\mid\mathcal F_{t-1}
    \right]
    \le\overline\alpha^2e_t^2
    \qquad\text{on }\mathcal A_t.
\]
The nonexpansiveness part of the same proof gives the bound with coefficient
one on $\mathcal A_t^c$.  Since $e_t\le2/\sqrt{p_{\min}}$, we obtain
\begin{equation}\label{eq:adaptive-coordinate-contraction}
    \E\widehat e_t^2
    \le
    \overline\alpha^2\E e_t^2
    +\frac{4(1-\overline\alpha^2)}{p_{\min}}
      \Prob(\mathcal A_t^c).
\end{equation}

For $t=1,\ldots,T-\bar s-1$, suppose that both $\mathcal A_t$ and
$\mathcal A_{t+1}$ occur. We compare
$\cS_{\kappa_t}(c_t,\pi_t)$ with
$\cS_{\kappa_{t+1}}(c_{t+1},\pi_{t+1})$ by changing one argument at a
time. First, \eqref{eq:warm-start-change} and
Lemma~\ref{lem:optimizer-set-sensitivity} bound the rescaling from
$\widehat x_t$ to $x_{t+1}^0$ and the change from $\pi_t$ to
$\pi_{t+1}$ at fixed $c_t$ and $\kappa_t$. Second,
Lemma~\ref{lem:target-drift} bounds the change from $c_t$ to
$c_{t+1}$. Finally, \eqref{eq:kappa-set-sensitivity} and
\eqref{eq:adaptive-kappa-change} bound the change from $\kappa_t$ to
$\kappa_{t+1}$. The bounds in
\eqref{eq:adaptive-kappa-bounds} ensure that
\eqref{eq:kappa-set-sensitivity} applies.

Since the $p^{-1}$ norm is at most $1/\sqrt{p_{\min}}$ times the
Euclidean norm, combining these bounds gives
\begin{equation*}
    e_{t+1}\le\widehat e_t+\widetilde\delta_t,
\end{equation*}
where
\begin{equation*}
\begin{aligned}
    \widetilde\delta_t
    :={}&
    \frac{
        2(1+H_4)
        +H_1H_5
        \sqrt{2r_{\max}/\kappa(p_{\min}/4)}
    }{\sqrt{p_{\min}}(t+m)}\\
    &+
    \frac{
        2H_4\norm{\bar w}_2
    }{\sqrt{p_{\min}}(s_t-1)}.
\end{aligned}
\end{equation*}
If either event fails, the diameter of $\cX(\pi_{t+1})$ in the
$p^{-1}$ norm gives $e_{t+1}\le2/\sqrt{p_{\min}}$.

Combining the two cases gives
\begin{equation}\label{eq:adaptive-distance-recursion}
    e_{t+1}
    \le
    (\widehat e_t+\widetilde\delta_t)
    \mathbf 1_{\mathcal A_t\cap\mathcal A_{t+1}}
    +\frac{2}{\sqrt{p_{\min}}}
    \mathbf 1_{(\mathcal A_t\cap\mathcal A_{t+1})^c}.
\end{equation}

Let $a_t=(\E e_t^2)^{1/2}$. Minkowski's inequality,
\eqref{eq:adaptive-coordinate-contraction}, and
\eqref{eq:adaptive-distance-recursion} imply
\begin{equation}\label{eq:adaptive-mean-recursion}
    a_{t+1}\le\overline\alpha a_t+d_t,
\end{equation}
where
\begin{equation*}
\begin{aligned}
    d_t
    :={}&\widetilde\delta_t
    +\frac{2\sqrt{1-\overline\alpha^2}}{\sqrt{p_{\min}}}
        \sqrt{\Prob(\mathcal A_t^c)}\\
    &+
    \frac2{\sqrt{p_{\min}}}
        \sqrt{\Prob(\mathcal A_t^c\cup\mathcal A_{t+1}^c)}.
\end{aligned}
\end{equation*}
Using $(u+v+w)^2\le3(u^2+v^2+w^2)$ and the union bound, $d_t^2$
is bounded by a constant multiple of
\[
    \frac{1}{(t+m)^2}
    +\frac{1}{(s_t-1)^2}
    +\Prob(\mathcal A_t^c)
    +\Prob(\mathcal A_{t+1}^c).
\]
The first two terms have bounded sums. Indeed,
\[
    \sum_{t\ge1}\frac{1}{(t+m)^2}<\infty,
\]
and the substitution $k=s_t-1=T-t$ gives
\[
    \sum_{t=1}^{T-\bar s-1}\frac{1}{(s_t-1)^2}
    \le
    \sum_{k=\bar s+1}^{\infty}\frac{1}{k^2}.
\]
Equation~\eqref{eq:adaptive-estimate-tail} shows that the probabilities
also have a bounded sum after the concentration bound begins to apply.
The preceding periods are finite in number and contribute only a constant.
Hence,
\begin{equation*}
    \sup_{T>\bar s}
    \sum_{t=1}^{T-\bar s-1}d_t^2
    <\infty.
\end{equation*}
Iterating
\eqref{eq:adaptive-mean-recursion} gives
\begin{equation*}
    a_t
    \le
    \overline\alpha^{t-1}a_1
    +\sum_{\ell=1}^{t-1}
      \overline\alpha^{t-1-\ell}d_\ell.
\end{equation*}
Therefore, the Cauchy--Schwarz inequality gives
\begin{equation}\label{eq:adaptive-distance-sum}
    \sum_{t=1}^{T-\bar s}\E e_t^2
    \le
    \frac{2a_1^2}{1-\overline\alpha^2}
    +\frac{2}{(1-\overline\alpha)^2}
      \sum_{t=1}^{T-\bar s-1}d_t^2.
\end{equation}
The right-hand side is bounded independently of $T$.  Summing
\eqref{eq:adaptive-coordinate-contraction} shows that the same is true of
$\sum_{t=1}^{T-\bar s}\E\widehat e_t^2$.

\medskip\noindent\emph{Step 4: The regret bound.}
On $\mathcal G_t$, the definition of $\rho$ implies that
\[
    \max_{j\in[m]}|\pi_{t,j}-p_j|
    \le\norm{\pi_t-p}_2
    \le\frac{\rho}{2}
    \le\frac{p_{\min}}{2}.
\]
Hence, $\mathcal A_t$ occurs. Equation~\eqref{eq:adaptive-kappa-bounds}
then gives $\kappa_t\le\kappa(p_{\min})$, so
\eqref{eq:algorithm-parameters} holds with $\kappa_t$.

The constant in Proposition~\ref{prop:loss-gap} does not depend on
$\kappa$. Its proof uses $\kappa$ only through the bounds
\[
    \norm{\xi^\star(c_t,\pi_t)}_2\le\rho
    \qquad\text{and}\qquad
    \norm{\xi^\star(c_t,\pi_t)}_2\le\zeta/2,
\]
which hold for every $\kappa$ satisfying
\eqref{eq:algorithm-parameters}. Therefore, Proposition~\ref{prop:loss-gap}
applies in period $t$ with $\kappa_t$. For every
$t\le T-\bar s$, we have
\begin{align*}
    \E R_t
    \le{}&
    H_2(1+\norm r_2)
    \max\left\{
        \frac{64K_{\max}^2}{p_{\min}^2},
        \frac{4\norm W_2^2}{\zeta^2}
    \right\}
    \E\widehat e_t^2\\
    &+
    H_2(1+\norm r_2)\Prob(\mathcal G_t^c).
\end{align*}
If $T\le\bar s$, Lemma~\ref{lem:loss-telescoping},
\eqref{eq:loss-normalization}, and
Lemma~\ref{lem:normalized-loss-properties}(ii) give the desired bound directly.
If $T>\bar s$, sum the last display over $t=1,\ldots,T-\bar s$ and use
\eqref{eq:adaptive-distance-sum} and \eqref{eq:failure-probability-sum}.
The final $\bar s$ periods contribute at most
$\bar sH_2(1+\norm r_2)$.  Every term is bounded by a constant independent
of $T$, which proves the theorem.
\end{proof}
}
\bibliographystyle{apalike}
\bibliography{references_Sept01}
\end{document}